\newtheorem{lemma}{Lemma}
\definecolor{customblue}{RGB}{149,203,251}
\newmdenv[
  topline=false,
  bottomline=false,
  rightline=false,
  linewidth=2pt,
  linecolor=customblue,
  backgroundcolor=gray!20,
  leftmargin=10pt,
  rightmargin=10pt,
  innertopmargin=10pt,
  innerbottommargin=10pt,
  font=\small
]{customquote}
\newcommand{\ours}{CAD-Coder}
\newcommand{\dataset}{GenCAD-Code dataset}
\newcommand{\datasetnoe}{GenCAD-Code}
\begin{document}

\title{\ours{}: An Open-Source Vision-Language Model for Computer-Aided Design Code Generation} %

\SetAuthors{%
	Anna C.\ Doris\affil{1}\CorrespondingAuthor{adoris@mit.edu},
	Md Ferdous Alam\affil{1},
    Amin Heyrani Nobari\affil{1},
    Faez Ahmed\affil{1}
	}

\SetAffiliation{1}{Massachusetts Institute of Technology, Cambridge, MA }

\maketitle
\pagestyle{plain}

\begin{abstract}

Efficient creation of accurate and editable 3D CAD models is critical in engineering design, significantly impacting cost and time-to-market in product innovation. Current manual workflows remain highly time-consuming and demand extensive user expertise. While recent developments in AI-driven CAD generation show promise, existing models are limited by incomplete representations of CAD operations, inability to generalize to real-world images, and low output accuracy. This paper introduces \ours{}, an open-source Vision-Language Model (VLM) explicitly fine-tuned to generate editable CAD code (CadQuery Python) directly from visual input. Leveraging a novel dataset that we created—\datasetnoe{}, consisting of over 163k CAD-model image and code pairs—\ours{} outperforms state-of-the-art VLM baselines such as GPT-4.5 and Qwen2.5-VL-72B, achieving a 100\% valid syntax rate and the highest accuracy in 3D solid similarity. Notably, our VLM demonstrates some signs of generalizability, successfully generating CAD code from real-world images and executing CAD operations unseen during fine-tuning. The performance and adaptability of \ours{} highlights the potential of VLMs fine-tuned on code to streamline CAD workflows for engineers and designers. \ours{} is publicly available at: \href{https://github.com/anniedoris/CAD-Coder}{https://github.com/anniedoris/CAD-Coder}.
\end{abstract}

\begin{nomenclature}
\entry{LLM}{Large Language Model}
\entry{VLM}{Vision-Language Model}
\end{nomenclature}

\section{INTRODUCTION}
Generative AI has recently shown great promise in engineering design, with large language models (LLMs) and vision-language models (VLMs) beginning to automate parts of the engineering design process~\cite{alrashedy2024generating}.
An indispensable step in the design of any engineering system, consumer product, or structure is the creation of a 3D Computer-Aided Design (CAD) model. 
With the exception of specific functionalities, like generative design and topology optimization, the process of CAD modeling -- drawing sketches, defining constraints, extruding bodies -- remains largely manual, performed by designers with years of engineering experience and familiarity with CAD modeling software. 
This has motivated researchers to seek ways to partially automate CAD creation using learning-based approaches. 
An AI assistant capable of generating editable CAD models from textual or visual specifications would significantly reduce the time engineers spend on manual tasks. It would also make CAD modeling more accessible to beginners.
An important goal of the research has been to develop generative models that can produce valid, parametric CAD models from high-level inputs, thereby accelerating the design-to-production pipeline. Unlike most 3D shape generation work that outputs meshes or voxels, a CAD automation system should provide \emph{modifiable} designs in a programmatic form, since practical engineering applications demand editability and reusability of the 3D CAD.

\begin{figure*}[t]
\begin{center}
\setlength{\unitlength}{0.012500in}%
\includegraphics[width=\linewidth]{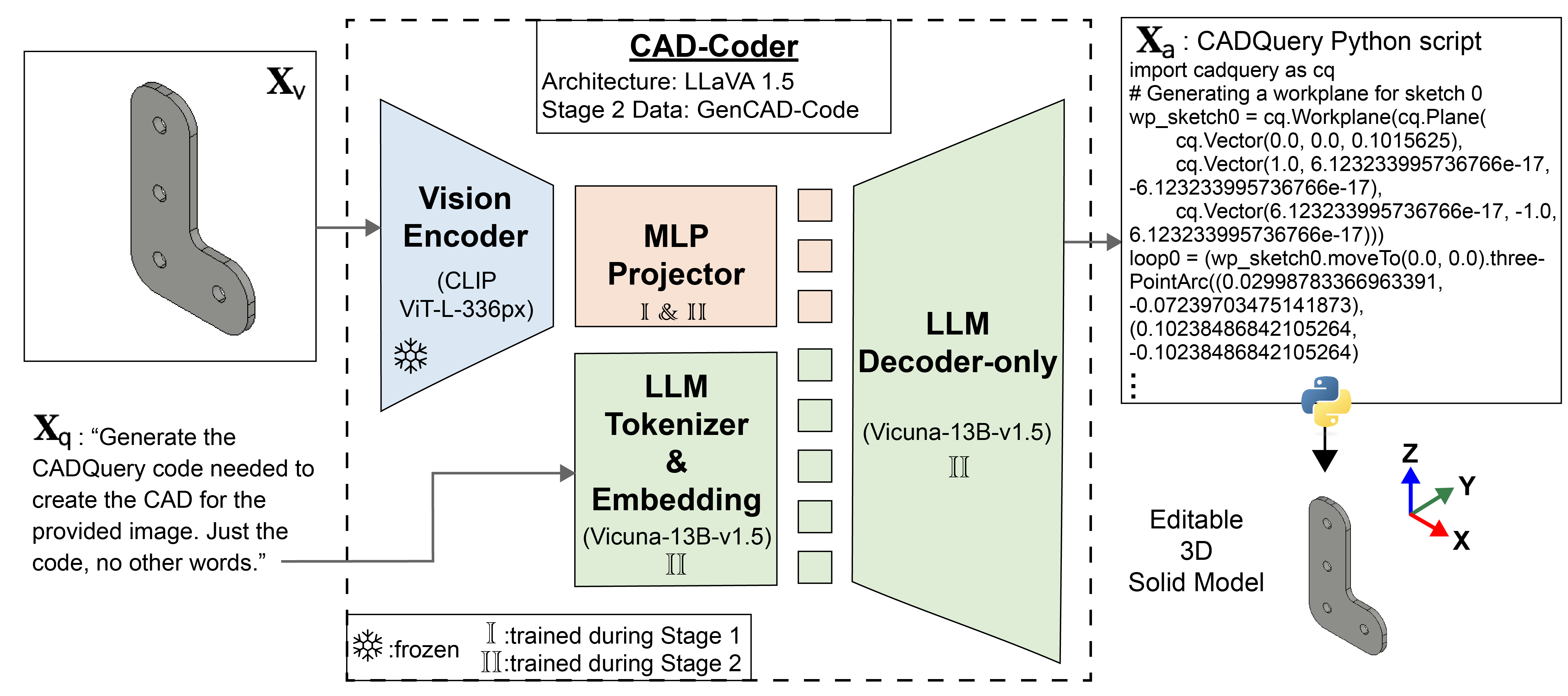}
\end{center}
\caption{Overview of \ours{}. The VLM accepts an image as input and outputs CadQuery code, which can be run as a Python script to produce an editable, 3D solid CAD model. \ours{} has a LLaVA 1.5-type architecture and is fine-tuned on the \dataset{}.}
\label{fig:overview} 
\end{figure*}

Early approaches to AI-driven CAD generation have made progress but face notable limitations. Many prior works trained bespoke models solely on CAD data or limited synthetic distributions, which restricted their scope of understanding. A number of works in recent years have explicitly trained ML models from scratch to generate CAD, many focusing non-editable 3D solid generation~\cite{zhou20213d, luo2021diffusion, jun2023shap, liu2022towards, michel2022text2mesh} with fewer focusing on editable, parametric CAD program generation~\cite{xu2022skexgen, wu2021deepcad, alam2024gencad}. Many of these past editable CAD systems focused on a small set of primitive operations, which limited their ability to handle real-world variation. CAD-specific models that are trained from scratch lack the broad visual and contextual knowledge that modern foundation models possess. As a result, prior trained-from-scratch CAD generators face two generalizability issues: 1) they may fail on inputs that deviate from their training distribution and 2) they fail when asked to use CAD operations not seen during training. As many existing AI solutions for 3D modeling either produce non-editable outputs (e.g., meshes) or rely on specialized training with limited generalization capacity, there is a need for a more robust approach to editable CAD generation.

Meanwhile, vision-language foundation models (VLMs) have achieved remarkable generalization across tasks in computer vision and natural language processing (NLP). VLMs, large language models (LLMs) integrated with vision encoders—such as GPT-4o~\footnote{https://openai.com/index/gpt-4o-system-card/} (closed-source) or LLaVA~\cite{liu2023llava} (open-source)—have the potential to mitigate these two issues. Pre-trained on billions of tokens, LLMs are already capable of generating code and therefore have the capacity to output a complete, editable CAD representation: code for CAD, using an existing, well-defined CAD scripting library. Pre-trained on millions of images, VLMs have learned meaningful latent representations of real-world images, positioning them well for CAD generation conditioned on images of real objects. 

Prior studies have used prompt engineering or few-shot strategies to coax general VLMs into image-conditioned CAD generation tasks, but without domain-specific training, their performance and reliability remains limited (e.g., they may hallucinate incorrect geometry or produce code with syntax errors~\cite{li2024llm4cad}). Fine-tuning has been used to significantly improve the performance of LLMs for text-conditioned CAD code generation~\cite{du2024blenderllm} and for image-conditioned generation of CAD programs expressed in domain-specific languages (DSLs)~\cite{wu2024cadvlm, yuan2024openecad}. However, reliance on DSL-based CAD programs--rather than established, full-featured CAD scripting libraries--restricts CAD generation to the narrow set of operations defined in the DSL and reduces a model's ability to leverage its pretraining knowledge.

This raises the question: can we harness a generalist VLM to generate \emph{parametric CAD code} conditioned on images, bridging the gap between high-level visual input and precise engineering code? To our knowledge, no published work has fine-tuned an open-source VLM specifically for the direct synthesis of CAD code using a widely-adopted scripting library. There is a gap in the literature for an approach that combines the broad CAD scripting and visual pre-training knowledge of foundation models with dedicated training on CAD code to achieve both versatility and accuracy in image-conditioned CAD scripting automation.

In this paper, we propose \textbf{\ours{}}, an open-source vision-language model tailored for computer-aided design code generation. We utilize a powerful, general VLM and fine-tune it end-to-end for the task of producing CAD code conditioned on image inputs. Concretely, \ours{} takes in an image and generates readily-executable code using CadQuery, a Python-based parametric CAD scripting library. We validate \ours{} through extensive experiments, demonstrating significant performance gains over the state-of-art, image-conditioned, CAD code-generating baselines. Notably, \ours{} attains an 100\% valid syntax rate, meaning every generated CAD script compiles successfully — a substantial improvement over the compile rates of existing solutions. In terms of generated 3D solid accuracy, \ours{}’s generated shapes closely match the ground-truth solids, improving upon the precision achieved by state-of-the-art baselines. We also conduct tests to probe generalization: for example, we feed \ours{} photographs of real objects (outside the fine-tuning dataset distribution) and prompt it to produce CAD code, or, we ask it to use CAD operations not seen during fine-tuning. In these challenging scenarios, \ours{} shows encouraging performance, often producing reasonable CAD where trained-from-scratch or DSL-based solutions would struggle.

In summary, our key contributions are as follows:

\begin{enumerate}
    \item \textbf{A VLM Trained for CAD Code Generation:} We introduce \ours{}, a fine-tuned, open-source vision-language foundation model designed specifically for the generation of CAD code. A LLaVA-derived VLM, our model, given image inputs, generates readily-executable Python CadQuery code. Our fine-tuned VLM approach is different from existing methods in that it is an image-conditioned method that outputs CAD in an \textit{established code representation} (i.e. CadQuery), addressing generalization limitations of prior works.

    \item \textbf{Performance Comparison}: \ours{} achieves superior results on the CAD code generation task, outperforming strong VLM baselines including GPT-4.5 and Qwen2.5-VL-72B. It produces valid code in 100\% of cases and significantly improves geometric accuracy of generated solids over previous methods, affirming our method of fine-tuning a foundation model for this task.
    
    \item \textbf{\ours{} Generalization Experiments}: We provide evidence that \ours{} has generalization ability to tasks not explicitly included in the model's fine-tuning dataset, owing to its inherited, broad knowledge. On a small test dataset of images of real, 3D-printed objects, we illustrate that \ours{} generates reasonable CadQuery code, an advantage over trained-from-scratch CAD generation approaches which struggle with images out-of-distribution with respect to their training data. We also illustrate that with careful prompting and selection of the pre-trained LLM that is fine-tuned, \ours{} variants can utilize CAD operations not explicitly included in the fine-tuning dataset, a benefit over existing trained-from-scratch approaches and methods reliant on DSL-based CAD programs.

    \item \textbf{A Publicly Available Dataset of CAD Code Paired with Images}: We generate CadQuery Python code for 163,671 CAD models. These Python files accompany the rendered CAD images from the GenCAD~\cite{alam2024gencad} dataset, comprising the largest publicly available dataset, named GenCAD-Code, of CAD code paired with CAD images.
\end{enumerate}

Through these contributions, our work positions foundation models as a promising avenue for next-generation CAD tools. \ours{} not only bridges the gap between high-level visual understanding and low-level CAD programming, but also underscores the value of marrying general VLMs with domain-specific fine-tuning to meet the requirements of engineering design. The following sections detail the methodology of \ours{}, its experimental evaluation against existing baselines, and a discussion of \ours{}'s generalizability to tasks not seen during fine-tuning.

\begin{table*}[t]
\caption{Overview of some existing approaches for conditional, editable-CAD generation.}
\label{tab:related_work}
\begin{center}
\setlength\tabcolsep{5pt} %
\renewcommand{\arraystretch}{1.2} %
\begin{tabular}{@{}lccccc@{}}
\toprule
Method & \makecell{Conditional \\ Input} & \makecell{Output: Editable- \\ CAD Representation} & 2D or 3D CAD & \makecell{Model\\Architecture} & \makecell{Training \\ Method} \\
\toprule
GenCAD~\cite{alam2024gencad} & Image & CAD program & 3D & \makecell{Transformer encoder/decoder + \\ Contrastive image encoder + \\ Diffusion prior} & From scratch \\
\midrule
CSGNet~\cite{sharma2018csgnet} & Image & CAD program & 2D/3D & \makecell{Encoder (CNN) \\ Decoder (RNN)} & From scratch \\
\midrule
Text2CAD~\cite{khan2024text2cad} & Text & CAD program & 3D & \makecell{Transformer} & From scratch \\
\midrule
CADCodeVerify~\cite{alrashedygenerating} & Text & \makecell{CAD Code: \\ CadQuery} & 3D & \makecell{GPT-4 \\ Gemini 1.5 Pro \\ CodeLlama} & \makecell{None: \\ Multi-Model} \\
\midrule
IdeaToCAD~\cite{ocker2025idea} & Image + Text & \makecell{CAD Code: \\ CadQuery} & 3D & \makecell{GPT-4o} & \makecell{None: \\ Multi-Agent} \\
\midrule
CAD-Assistant~\cite{mallis2024cad} & Image + Text & \makecell{CAD Code: \\ FreeCAD} & 3D & \makecell{GPT-4o} & \makecell{None: \\ Tool Augmented} \\
\midrule
LLM4CAD~\cite{li2024llm4cad} & Image + Text & \makecell{CAD Code: \\ CadQuery} & 3D & GPT-4/GPT-4V + Debugger & \makecell{None: \\ Debugger} \\
\midrule
BlenderLLM~\cite{du2024blenderllm} & Text & \makecell{CAD Code: \\ Blender} & 3D & \makecell{Qwen2.5-Coder-7B-Instruct} & \makecell{Fine-tuning + \\ Self-improvement} \\
\midrule
Text2CAD~\cite{sun2025large} & Text & \makecell{CAD Code: \\ CadQuery} & 3D & \makecell{GPT-3.5 based} & Fine-tuning \\
\midrule
Cad-VLM~\cite{wu2024cadvlm} & Image + Text & CAD program & 2D & \makecell{Asymmetric \\ transformer \\ encoder-decoder} & Fine-tuning \\
\midrule
OpenECAD~\cite{yuan2024openecad} & Image + Text & CAD program & 3D & \makecell{TinyLLaVA} & Fine-tuning \\
\midrule
\makecell[l]{\ours{} \\ (Ours)} & Image + Text & \makecell{CAD Code: \\ CadQuery} & 3D & LLaVA 1.5 & Fine-tuning \\
\bottomrule
\end{tabular}
\end{center}
\end{table*}

\section{RELATED WORK}
Since a number of works -- particularly in recent years -- have developed AI-based solutions for CAD generation, we narrow our focus to those existing methods that support a conditional input and generate \textit{editable} CAD representations as output (Table \ref{tab:related_work}). 

\subsection{Editable CAD Generation: Models \& Training}
The majority of the image and/or text conditioned editable-CAD generation models presented in Table \ref{tab:related_work} make use of transformers~\cite{vaswani2017attention}, which enable prediction of the next CAD operation based on the previously predicted operations. Model architectures differ in accordance with whether the model was trained from scratch or makes use of pre-trained LLMs and/or VLMs.

\subsubsection{Training from Scratch}
Several works \cite{alam2024gencad, sharma2018csgnet, khan2024text2cad} have trained conditioned, editable CAD generating models from scratch. For example, \cite{alam2024gencad} trained an image-to-CAD-program model -- consisting of three components -- from scratch. They first trained a transformer-based encoder-decoder to learn the latent representation of the CAD commands and to autoregressively generate CAD commands. Using the frozen CAD command encoder, they contrastively trained an image encoder to learn a joint image-CAD command latent space. Finally, they train a diffusion prior model to generate CAD command latent vectors conditioned on image latent vectors. Sequentially combining the contrastively trained image encoder with the diffusion prior model with the autoregressively trained transformer decoder yields their model, GenCAD, which can generate a CAD command sequence given an input image. GenCAD achieved SOTA -- outperforming DeepCAD~\cite{wu2021deepcad} and SkexGen~\cite{xu2022skexgen} -- on chamfer-distance-based solid accuracy metrics.

\subsubsection{Leveraging Foundation Models}
Instead of training a model from scratch, researchers have begun utilizing LLMs and VLMs as a foundation for CAD generation and design automation. A key advantage of LLMs is their ability to handle free-form user inputs and generalize across tasks, making them attractive for engineering design applications. 
\paragraph{Without Additional Training:} A number of works have explored methods for using off-the-shelf pre-trained foundation models, including leveraging multiple models~\cite{alrashedygenerating, ocker2025idea} or integrating the models with external tools~\cite{du2024blenderllm, mallis2024cad}. Alrashedy et al.~\cite{alrashedygenerating} argue that a single-pass LLM often fails to meet all design requirements, so they incorporate a VLM in the loop to iteratively verify and correct the output. In their CADCodeVerify framework, GPT-4 first writes CAD code given a prompt, then a VLM analyzes the rendered 3D solid and asks itself targeted questions (e.g., “Are the holes the correct diameter?”). The VLM’s answers help identify deviations, which are fed back to GPT-4 to refine the code. This self-critique loop improved the shape accuracy and success rates of generated models. Another work~\cite{ocker2025idea} models the design process as a team of specialized AI agents and realize a human-AI co-design loop with multiple LLM-based agents taking on different roles (e.g., requirement engineer, CAD engineer, quality checker). This multi-agent approach mirrors how human design teams iterate on CAD, and was shown to produce more complete and error-checked CAD outputs than a zero-shot VLM.

CAD-Assistant by Li et al.~\cite{mallis2024cad} integrate a VLM ``planner'' with a Python-API-driven CAD engine (FreeCAD) to enable zero-shot solving of generic CAD tasks. \cite{du2024blenderllm} combined GPT-4 and GPT-4V with a debugger so that model-generated programs with syntax errors could be iteratively improved. Despite the power and extensive pre-training of OpenAI's GPT models, GPT-4 and GPT-4V were in many cases incapable of generating accurate CAD code. CAD model accuracy (IOU) dropped to near zero for mechanical spring CAD models, even when GPT-4 was integrated with the debugger. 

\paragraph{With Fine-Tuning:} The previously highlighted studies underscore the growing role of foundation models in CAD and engineering design. However, without additional training, there is a limit to how much a model's performance can improve on the CAD generation task. Fine-tuning, where all or some portion of a model's weights are further trained on the desired task, offers a middle-ground between training from scratch and leveraging off-the-shelf pre-trained models. Relatively few studies—\cite{du2024blenderllm}, \cite{sun2025large}, \cite{wu2024cadvlm}, and \cite{yuan2024openecad}—have investigated foundation models' capacity to be fine-tuned for the CAD generation task. \cite{du2024blenderllm} perform end-to-end supervised fine-tuning on an LLM to improve its ability to generate Blender Python code. Finetuning results in 2X improvement in the accuracy of generated 3D solids and in a substantial reduction ($\sim$30\%) in the number of generated scripts with syntax errors. Their fine-tuned model also substantially outperforms GPT-4o and other closed-source LLMs in its ability to generate Blender CAD code. Similarly, \cite{sun2025large} fine-tunes GPT-3.5 on a specialized text-to-CadQuery dataset, significantly improving success rates of code generation. Extending to the visual domain, \cite{wu2024cadvlm} fine-tune a pre-trained vision transformer on a dataset of sketches to improve its ability to generate 2D CAD. Yuan et al. \cite{yuan2024openecad} use LoRA to train OpenECAD, a fine-tuned TinyLLaVA model that outputs CAD programs given an image input, showing that their method outperforms gpt-4o-mini on an accuracy-based scoring function. The authors also illustrate the generalization potential of fine-tuned VLMs, by demonstrating that OpenECAD can generate a CAD program when conditioned on a hand-drawn sketch (despite the fine-tuning dataset only including images of CAD.)

A fine-tuned foundation model could offer several advantages over training a CAD generation model from scratch. Pre-trained models have already learned skills that are translatable to the CAD generation task. When an existing, code-based CAD representation is used (e.g. CadQuery), fine-tuning a pre-trained LLM that has pre-existing knowledge of that code could enable the model to leverage pre-training knowledge and utilize CAD operations that may have not been present in a limited fine-tuning dataset. Likewise, if image-conditioned CAD generation is desired, utilizing a pre-trained image encoder (e.g. CLIP~\cite{radford2021learning}) that has been trained on millions of image-text pairs could improve the model's ability to generate a meaningful image latent vector. Even more critically, pre-trained image encoders -- which have been trained on images of real-world objects -- can promote CAD generation conditioned on images of \textit{real-world} objects, when most existing image-CAD datasets~\cite{alam2024gencad} available for training are limited to images of rendered CAD. Our work continues along this line by fine-tuning a vision-language foundation model specifically for CAD code synthesis.

\subsection{Conditional CAD Generation}
Conditional methods are those that base their CAD generation on an input -- typically text, an image, or both -- so that the generated CAD is derived from a user input. For mechanical engineering tasks (e.g., creating CAD from a physical prototype), conditional CAD generation is much more useful than unconditional generation, where CAD is generated by sampling from a learned distribution. Until recently, much existing work on AI for CAD has focused on unconditional generation~\cite{wu2021deepcad, xu2022skexgen, willis2021engineering}.

All of the works presented in Table \ref{tab:related_work} explore methods for input-conditional CAD generation. \cite{du2024blenderllm, sun2025large, khan2024text2cad, alrashedygenerating} focus on text-conditioned CAD generation. For example, Khan et al.~\cite{khan2024text2cad} present a direct text-to-parametric CAD framework that accepts text instructions and outputs the corresponding CAD program. While text-conditioned CAD generation allows for user control over the generated CAD, geometries of real-world objects can be difficult to express using natural language. \cite{sharma2018csgnet, alam2024gencad} develop models explicitly trained to generate a CAD program given an input image. Combining modalities, \cite{wu2024cadvlm, ocker2025idea, mallis2024cad, li2024llm4cad, yuan2024openecad} all develop methods for image + text conditioned CAD generation. \cite{li2024llm4cad} go as far as to study the differences between conditioning VLMs on text versus image + text for CAD generation. While text-only conditioning was surprisingly effective, they found that providing both image and text inputs increasingly paid off for more complex solids. Building on these works and findings, \ours{} is conditioned on both input images and text, although, for now, we employ uniform text prompts.

\subsection{Editable CAD Representations and Datasets}
We also focus on methods and datasets that make use of ``editable'' CAD representations, parametric CAD programs that encode the design history of the geometric model. These editable representations are more useful to mechanical designers than other representations of 3D solids -- such as voxels~\cite{zhou20213d}, point clouds~\cite{luo2021diffusion}, or meshes~\cite{jun2023shap} -- since parametric values can easily be modified after generation.

Refs. \cite{wu2021deepcad, alam2024gencad, sharma2018csgnet, wu2024cadvlm, yuan2024openecad} train models that output editable CAD in the form of DSL-based CAD programs. These CAD programs treat CAD commands as a language or as a grammar. For example, \cite{khan2024text2cad} and \cite{alam2024gencad} use the CAD program representation defined by \cite{wu2021deepcad}, where a CAD program is represented as a sequence of CAD command-parameter vectors, with individual commands and parameters comprising a vocabulary. Limited to 2D, \cite{wu2024cadvlm} use a different, but similarly structured, CAD program representation. Ref. \cite{yuan2024openecad} also define their own DSL for CAD commands (OpenECAD operations) which they later convert into PythonOCC code for execution. These CAD program representations are challenging because they cannot generate solids without conversion scripts that translate them into code-based formats, and they are often incomplete. Ref. \cite{wu2024cadvlm}'s CAD program representation is limited to sketches (lines, arcs, and circles) plus constraints while \cite{wu2021deepcad}'s CAD program representation is limited to the same sketch commands (defined differently) plus extrusion commands. As of yet, no complete -- including splines, revolves, sweeps, lofts, fillet, chamfer, etc. --  CAD program representation has been defined. Defining a complete DSL-based CAD program representation would be challenging due to the non-linear interdependence and referential nature of many CAD operations. 

An alternative to defining a CAD program representation is to utilize an existing, CAD code scripting package. Written in widely-adopted coding languages (e.g. Python, C++), this CAD code representation aligns well with the knowledge base of foundation models. These scripting packages have the benefit of inherently being editable while also offering a complete representation, as they can be used to define any 3D CAD model. \cite{ocker2025idea, du2024blenderllm, li2024llm4cad, alrashedygenerating, sun2025large, mallis2024cad} leverage pre-trained models to output code that can generate CAD using the Blender API,~\footnote{https://docs.blender.org/api/current/} the CadQuery Python package,~\footnote{https://CadQuery.readthedocs.io/en/latest/}, or the FreeCAD Python API.~\footnote{https://wiki.freecad.org/FreeCAD_API} The Blender API only works in conjunction with the Blender application/GUI, which is primarily designed for mesh-based animations and graphics. CadQuery and FreeCAD are both wrappers around OpenCascade,~\footnote{https://dev.opencascade.org/} a powerful, B-rep-based geometric modeling kernel and both can be run as stand-alone, GUI-less Python scripts for parametric CAD generation, an advantage over the Blender API. Given CadQuery's popularity~\cite{li2024llm4cad, sun2025large, alrashedygenerating, ocker2025idea} and extensive documentation, we train \ours{} to output CadQuery code. Note that in this paper, all references to ``CAD code'' refer to an editable CAD representation that makes use of a well-established CAD scripting library (e.g. CadQuery), while ``CAD program'' refers to a commands-as-vocabulary or author-defined DSL representation, such as those used by \cite{wu2021deepcad, wu2024cadvlm, alam2024gencad, yuan2024openecad}, discussed previously.

The availability of large scale, editable CAD datasets is somewhat limited due to the difficulty of obtaining human-created designs from commercial 3D programs. One of the largest available CAD datasets, ABC dataset~\cite{koch2019abc}, contains $1$M 3D models in different data formats such as STEP, STL, domain specific language (FeatureScript), and image. However, the ABC dataset includes duplicates and designs that do not produce valid 3D shapes~\cite{xu2024brepgen, jayaraman2022solidgen}. Many of the available editable CAD datasets in literature are variants of the ABC dataset. For example, the DeepCAD dataset~\cite{wu2021deepcad} contains approximately $172k$ CAD programs which are derived from a selected subset -- those containing prismatic sketch and extrude operations -- of 3D solids in the ABC dataset. \cite{khan2024text2cad} build on the DeepCAD dataset by pairing the $172k$ CAD models with 660K textual annotations. The GenCAD dataset~\cite{alam2024gencad} also builds on the DeepCAD dataset by coupling each of 168k CAD programs with five grayscale, isometric view images (of varying scales) of the rendered CAD. Our dataset, \datasetnoe{}, builds on the GenCAD dataset by converting all of the CAD programs into CadQuery code. Some existing CAD code datasets exist, although they are limited in size. \cite{du2024blenderllm} generated a dataset of 8k Blender scripts, while \cite{li2024llm4cad} created a custom dataset of 5 classes of mechanical parts (e.g., gears, springs) comprised of 5k examples, each consisting of a sketch, image, text, and CadQuery script. Our dataset, \datasetnoe{}—comprised of 163k image-CadQuery pairs—significantly expands the scale of image-CAD code datasets.

\section{METHODS}
In this section, we describe the generation of our \dataset{} and the architecture and training of \ours{}. We also characterize the evaluation metrics we use and the baselines we select for comparison with \ours{}.

\subsection{Dataset Generation}
\label{sec:data_gen}
To generate our \dataset{}, we converted each CAD program in the GenCAD dataset~\cite{alam2024gencad} to a CadQuery script. Each GenCAD CAD program consists of a sequence of CAD commands, and each CAD command is represented by $\mathbf{c}_i = (t_i, \mathbf{p}_i)$, where $\mathbf{c}_i \in \mathbb{R}^{17}$, $t_i$ is the command type ($t_i \in {\{\text{Sketch Line, Sketch Arc, Sketch Circle, Extrude}}\}$), and $\mathbf{p_i} \in \mathbb{R}^{16}$ are the parameters associated with each command. We write a script that directly converts each of these CAD commands into its corresponding CadQuery code; additional details can be found in our \href{https://github.com/anniedoris/CAD-Coder}{repository}. The GenCAD dataset provides five CAD rendered images per CAD program: one isometric view and four scaled isometric views. This process resulted in our \dataset{} of 163,671 CadQuery scripts, each matched with five CAD rendered images. There are 147,289 samples in the training set, 7,355 in the test set, and 9,027 in the validation set. 

The distribution of token counts for all 163k generated CadQuery scripts can be seen in Figure \ref{fig:token_count}. The average number of tokens per CadQuery script is 611 tokens, and 99.9\% of the CadQuery scripts contain less than 3000 tokens. In general, 3D solid complexity corresponds with number of tokens in its CadQuery script (see Figure \ref{fig:token_count}). The right-skewed dataset distribution illustrates that the dataset is imbalanced when it comes to simple and complex examples. Future work will focus on improving dataset balance to include more ``complex'' examples. 

\begin{figure}[t]
\begin{center}
\setlength{\unitlength}{0.012500in}%
\includegraphics[width=\columnwidth]{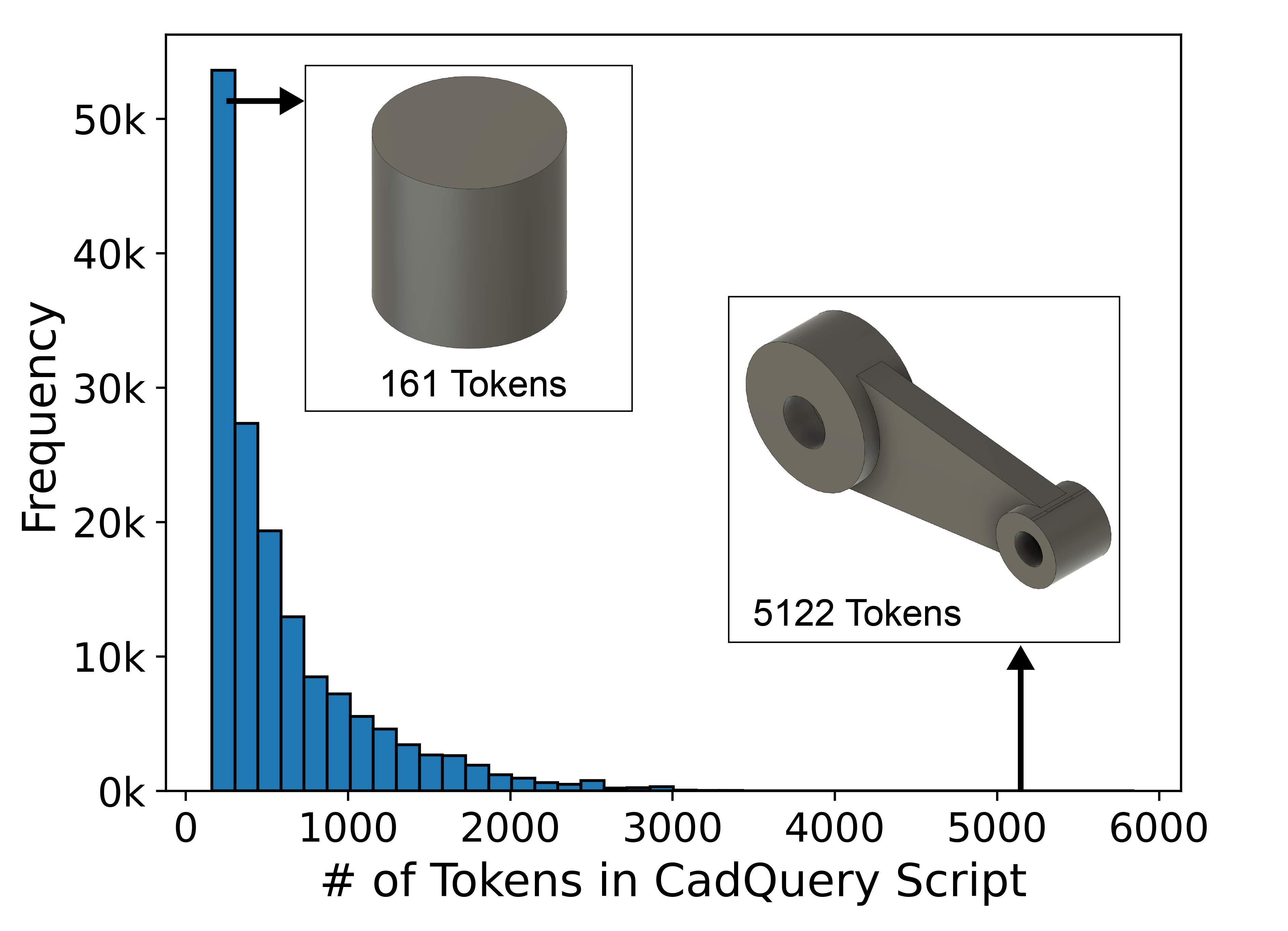}
\end{center}
\caption{Distribution of token counts for the CadQuery scripts in our \dataset{}.}
\label{fig:token_count} 
\end{figure}

It is important to note that the direct conversion process we use from the GenCAD CAD programs to CadQuery scripts does not necessarily result in the most concise CadQuery code. For example, the GenCAD CAD programs support only three sketch operations: line, arc, and circle. These CAD programs define rectangle sketches using four sequential line commands, and as such, our generated CadQuery scripts do as well. However, there is a more concise way in CadQuery to generate rectangle sketches, namely the \texttt{.rect()} method. %
Future work will explore methods for post-processing CadQuery scripts to leverage CadQuery's native concise representations (e.g., replacing sequential line commands with more efficient methods like rect()), thereby reducing script complexity.

\subsection{\ours{} Model Architecture and Training}
To train \ours{}, we use the same architecture as LLaVA 1.5~\cite{liu2024visual} and follow a two-stage training process similar to the visual instruction tuning method used to train LLaVA~\cite{liu2023llava, liu2023improved}. This method synthesizes a pre-trained LLM with a pre-trained vision encoder. We use Vicuna-13B-v1.5, a 13 billion parameter LLM model, as our pre-trained LLM and CLIP-ViT-L-336px as our vision encoder. CLIP-ViT operates on higher resolution images (336x336 pixels) than other CLIP variants, enabling it to capture finer image details.

\paragraph{Stage 1: Pre-training for Feature Alignment}
The function of the pre-training phase is to learn a mapping from the image features of the vision encoder to the word embeddings of the LLM. This is done by training a two-layer multi-layer perceptron (MLP) while keeping the vision encoder and LLM weights frozen. The dataset used for this pre-training phase is identical to that used by \cite{liu2023llava} and consists of 595k samples, each consisting of a question (text-only, $\mathbf{X}_{\texttt{q}}$) accompanied by an image ($\mathbf{X}_{\texttt{v}}$) and an answer (text-only, $\mathbf{X}_{\texttt{a}}$). This dataset was generated by~\cite{liu2023llava} by taking 595K image-caption pairs from the CC3M dataset, where $\mathbf{X}_{\texttt{v}}$ is the image, $\mathbf{X}_{\texttt{a}}$ is the original caption, and $\mathbf{X}_{\texttt{q}}$ is randomly selected from a finite set of questions that asks, in some way, for a description of the image (e.g. $\mathbf{X}_{\texttt{q}}$ = "Give a brief description of the image."). The two-layer MLP is trained to maximize the autoregressive likelihood function:

\begin{equation}
\label{eqn:autoregressive}
    p(\mathbf{X}_{\texttt{a}} \mid \mathbf{X}_{\texttt{v}}, \mathbf{X}_{\texttt{q}}) = \prod_{i=1}^{L} p_{\theta} (x_i \mid \mathbf{X}_{\texttt{v}}, \mathbf{X}_{\texttt{q}, < i}, \mathbf{X}_{\texttt{a}, < i})
\end{equation}

where $L$ is the tokenized length of $\mathbf{X}_{\texttt{a}}$, $\theta$ are the trainable parameters (in this case the weights of the two-layer MLP), and $\mathbf{X}_{\texttt{q}, < i}$ and $\mathbf{X}_{\texttt{a}, < i}$ are the question and answer tokens before the current prediction token, $x_i$.

\subsubsection{Stage 2: End-to-End \datasetnoe{} Fine-tuning}
\label{sec:e-to-e}
After alignment of the vision encoder and word embeddings during pre-training, we conducted supervised fine-tuning. To fine-tune, we use the 147k \datasetnoe{} training dataset described in Section \ref{sec:data_gen}. For each \datasetnoe{} pair, $\mathbf{X}_{\texttt{a}}$ is the CadQuery code, $\mathbf{X}_{\texttt{v}}$ is the unscaled isometric CAD rendered image, and $\mathbf{X}_{\texttt{q}}$ is always constant as: ``Generate the CadQuery code needed to create the CAD for the provided image. Just the code, no other words.'' We leave variation of the text prompt as future work. We filtered out training samples (less than 0.1\% of the original training dataset) that would cause the total number of tokens ($\mathbf{X}_{\texttt{q}}$ + $\mathbf{X}_{\texttt{v}}$ + $\mathbf{X}_{\texttt{a}}$) to be greater than 4096, the maximum number of tokens supported by many LLMs. With the vision encoder weights frozen, we again maximize the autoregressive likelihood function (Equation \ref{eqn:autoregressive}), except this time, the trainable parameters, $\theta$, are \textit{both} the two-layer MLP weights as well as all of the LLM's weights.

\subsection{Evaluation}
We compare \ours{} against state-of-the-art image-conditioned, CAD-code-generating baselines using two primary metrics. 
Due to the computational cost and inference time involved in generating and evaluating CAD programs, we evaluate all baselines on a representative subset of 100 randomly sampled test examples.
These samples are from the test set described in Section \ref{sec:data_gen} and formatted in the same way as the training data (see Section \ref{sec:e-to-e}).

\subsubsection{Baselines}
We compare our models to both closed-source and open-source VLMs capable of generating CadQuery code. For closed-source VLMs, we evaluate the top-three, best-performing\footnote{As of March 17, 2025.} models on the MMMU benchmark leaderboard~\cite{yue2024mmmu}: GPT-o1 (OpenAI's multimodal reasoning model), GPT-4.5 (OpenAI's advanced multimodal language model), and Gemini-2.0-Pro (Google's latest multimodal model). For open-source VLMs, we evaluate the top-three, best-performing\footnote{As of March 17, 2025.} models on the Huggingface OpenVLM Leaderboard~\cite{duan2024vlmevalkit}: InternVL2\_5-78B-MPO (Shanghai AI Lab's large-scale open-source VLM), Ovis2-34B (OpenGVLab's open-source multimodal model), and Qwen2.5-VL-72B (Alibaba's state-of-the-art VLM). To demonstrate the effectiveness of our training data and method for image-conditioned CAD-code generation, we also evaluate LLaVA-v1.5-13B, which is trained identically to our models except stage 2 uses a generic VQA dataset rather than our domain-specialized \dataset{}.

All of the baselines were tested using their default inference parameters except for maximum output tokens, which was set to 3450. This number was used so that the total number of tokens (accounting for the fixed-length prompt and image tokens) was 4096, the maximum number possible for many LLMs. An exception was made for GPT-o1 (max\_completion\_tokens was set to its default value), since its maximum output token quota is used by both the final output and the reasoning process, which happens under-the-hood and a user has no control on it. Since these models were not trained on our \dataset{} -- which ends each CadQuery script by returning the final solid assigned to the variable ``solid'' -- we appended to the prompt: ``Assign the final solid to the variable `solid' in the last line of code. Do not export or visualize the solid.'' This addition facilitates automated evaluation of the baselines' generated code. For Qwen2.5-VL-72B, we noticed that it often neglected to include package import statements in its generated scripts, so we also appended to the prompt: ``Be sure to include necessary imports.'' To encourage reproducibility of our results, we evaluate \ours{} with a temperature of 0 so that the VLM always selects the most probable next token. For reproducibility, we provide all trained model weights and the exact prompts used during evaluation.

\subsubsection{Metrics}
We evaluate all of the baselines and \ours{} on two metrics: valid syntax rate (VSR) and the intersection-over-union corresponding with the best alignment between the model-generated solid and the ground-truth solid (IOU\textsubscript{best}).

\paragraph{Valid Syntax Rate (VSR):} This metric is defined as the percentage of model-generated CadQuery scripts from the test subset that are syntactically valid and return no errors when run as Python files.

\paragraph{IOU\textsubscript{best}:}
To compare the shapes of solid geometries, denoted as $\Omega$ here, we first normalize the translation and scale of the models using a normalization operator:

\begin{equation}
    n(\Omega)=\{\frac{\mathbf{x}-\bar{\mathbf{x}}}{\sqrt{\frac{\operatorname{tr}(\mathbf{I})} {2\times \mathrm{V o l} ( \Omega_{2} )}}}\mid\mathbf{x}\in \Omega\},
\end{equation}

where $\bf I$ is the matrix of inertia and $\bf \bar{x}$ is the centroid for the solid $\Omega$. This normalization scales the object by the root mean squared radius of gyration, a choice justified and proven to be optimal in Appendix~\ref{app:iou} and Lemma~\ref{lm:aligment}. Note that in this paper, the goal is to generate correct geometry with respect to images. As such, scale information is not inherently an input to the models, and therefore, one must normalize scales for comparison. After this normalization, we align the principle axes of the generated and ground truth solids. The direction of principal axes is ambiguous, and as such we perform all possible valid alignments and use the one with the best Intersection Over Union~(IOU) value. The choice of these normalization parameters and the alignment proposed is justified in Appendix \ref{app:iou}. Unlike commonly used metrics such as Chamfer distance with bounding box or corner alignment (e.g., \cite{alam2024gencad,wu2021deepcad}), where shapes are converted to point clouds and normalized by translating the minimum to the origin and scaling the maximum to 1, our approach provides a higher fidelity and mathematically sound approach for comparing solid objects. While IOU-based measures have been used before to compare solid geometry~\cite{li2024llm4cad}, they have often been overlooked in prior work on datasets such as DeepCAD~\cite{jiang2022deep}, which relied on the aforementioned Chamfer distance. As demonstrated in Appendix~\ref{app:iou}, we argue that our approach is a more robust and principled alternative. Note that IOU\textsubscript{best} is computed over the set of CAD code scripts that execute successfully, excluding those with syntax errors.

\section{RESULTS}
Below we evaluate the performance of our proposed vision-language model (\ours{}) against baselines and additional variants, assessing both code validity (VSR) and geometric accuracy (IOU\textsubscript{best}) on the \datasetnoe{} test set.

\begin{table}[!th]
\centering
\caption{Evaluation of state-of-the-art, image-conditioned, code-generating models on 100 samples from the \datasetnoe{} test set. All models are evaluated on their rate of generating syntactically-valid CadQuery scripts (VSR) and on the accuracy of the resulting 3D solids (IOU\textsubscript{best}).}
\label{tab:main-results}
\resizebox{\columnwidth}{!}{%
\begin{tabular}{@{}lccc@{}}
\toprule
\textbf{Model} & $\quad \quad \quad \quad$ & \textbf{VSR $\uparrow$} & \textbf{IOU\textsubscript{best} $\uparrow$} \\ 
\midrule
\multicolumn{4}{c}{\cellcolor{gray!30}\textbf{Open Source Models}} \\
\midrule
InternVL2_5-78B-MPO & & 88\% & 0.379 \\ 
Ovis2-34B &  & 83\% & 0.408 \\ 
Qwen2.5-VL-72B &  & 94\% & 0.352 \\ 
LLaVA-v1.5-13B &  & 0\% & 0.0 \\
\midrule
\multicolumn{4}{c}{\cellcolor{gray!30}\textbf{Closed Source Models}} \\
\midrule
GPT-o1 &  & 88\% & 0.494 \\
GPT-4.5 &  & 84\% & 0.524 \\
Gemini-2.0-Pro &  & 82\% & 0.445 \\
\midrule
\multicolumn{4}{c}{\cellcolor{gray!30}\textbf{Ours}} \\
\midrule
\ours{} &  & \textbf{100\%} & \textbf{0.675} \\
\addlinespace
\bottomrule
\end{tabular}%
}%
\end{table}

\subsection{Training Details}
\ours{} is trained on 4 H100 GPUs. We follow the training parameters used for LLaVA 1.5~\cite{liu2023improved} with the exception of ``max\_model\_length'' which we set to 4096 for both stages (rather than 2048) to accommodate longer CadQuery scripts. Stage 1 training took 4.5 hours, and we trained for 1 epoch using a learning rate of 1e-3 and an effective batch size of 256. Stage 2 training took 5.7 hours, and we again trained for 1 epoch using a learning rate of 2e-5 and an effective batch size of 128. 

\subsection{Comparison with Baselines}
\label{sec:compare_with_baselines}
We present the results of the evaluation of \ours{} against existing image-conditioned, code-generating baselines in Table \ref{tab:main-results}. \ours{} outperforms all of the evaluated baselines in terms of both the valid syntax rate and the IOU metric. For each of the 100 samples in the test subset, \ours{} generates valid CadQuery code that throws no errors when run as Python scripts. The next-best performer in terms of VSR is Qwen2.5-VL-72B, which has a valid script generation rate of 94\%, although this is accompanied by an IOU\textsubscript{best} approximately half that of \ours{}.

In terms of IOU\textsubscript{best}, \ours{} -- with a score of 0.675 -- performs $\sim$60\% better than the next-best open-source model evaluated, Qwen2.5-VL-72B. \ours{} also outperforms all of the closed-source models evaluated, and the next-best performing model evaluated (GPT-4.5) has an IOU\textsubscript{best} score 0.151 lower. To help contextualize these IOU scores, we present various model-generated 3D solids and their corresponding IOU\textsubscript{best} scores in Figure \ref{fig:iou_context}. All of the models are shown in the alignment that maximizes their IOU\textsubscript{best} score. The disk models in the top row are organized from left-to-right by descending IOU\textsubscript{best} score. \ours{}'s generated solid -- with a near perfect score of 0.963 -- very closely matches the ground truth solid with the exception that the central hole is slightly larger. 
Even this small difference of 0.04 from a perfect IOU\textsubscript{best} score (1.0) is visually noticeable, indicating that \ours{}'s 0.151 improvement in IOU\textsubscript{best} score over the next-best-performing model is significant. The second row of Figure \ref{fig:iou_context} shows a more complex solid. While \ours{}'s generated solid is missing the rectangular feature in front, its double triangle shape better matches the ground truth solid than any of the baseline generated solids. This more complex example illustrates how many of the existing image-conditioned, code-generating baselines struggle as CAD model complexity increases. While \ours{} achieves a higher IOU\textsubscript{best} than the evaluated baselines, it still has significant room for improvement, particularly in generating CAD with more complex features.

It is also interesting to note that LLaVA-v1.5-13B -- which has the same architecture as \ours{} but is trained during stage 2 using a general-purpose VQA dataset rather than the \dataset{} -- has a score of zero on both VSR and IOU\textsubscript{best} metrics. While the model generates code-type text reminiscent of CadQuery, none of it is syntactically correct. This result stems from the fact that LLaVA-v1.5-13B—and its underlying LLM, Vicuna-13B-v1.5—seems to have no working knowledge of CadQuery. This demonstrates the usefulness of the architecture (LLaVA v1.5) and training strategy (domain-specific-only for stage 2) that we use for \ours{}. Our method produces a domain-specific state-of-the-art model, \textit{even when the pre-trained LLM used has no pre-existing knowledge of the domain-specific task}.

\begin{figure*}[!htb]
\begin{center}
\setlength{\unitlength}{0.012500in}%
\includegraphics[width=0.7\linewidth]{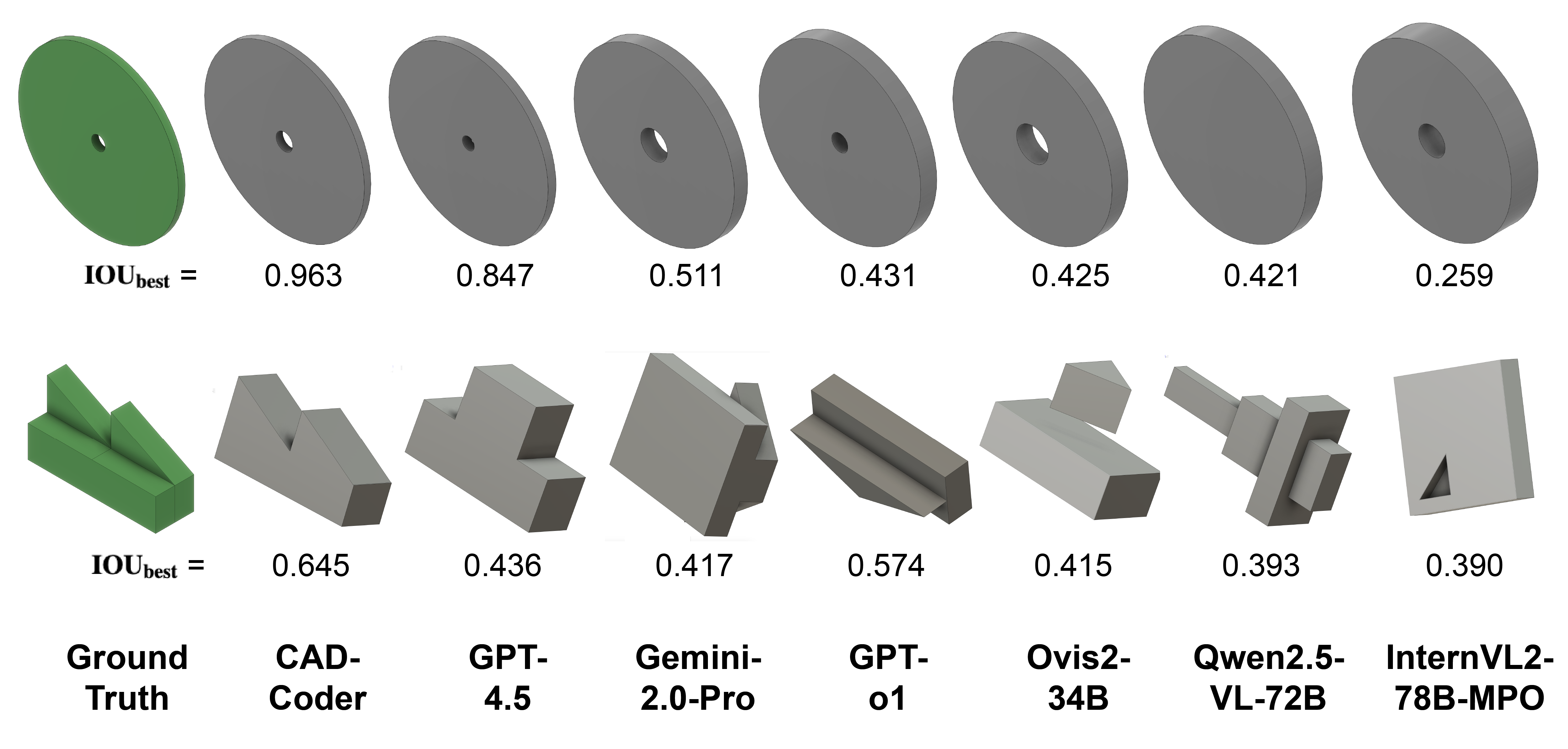}
\end{center}
\caption{Two examples comparing \ours{}'s generated solids with baseline generated solids. The IOU\textsubscript{best} score quantifies a solid's similarity to a ground truth solid, where an IOU\textsubscript{best} of 1 is a perfect score. The solids are depicted in their alignments that yield the IOU\textsubscript{best} score.}
\label{fig:iou_context} 
\end{figure*}

\begin{figure*}[t]
\begin{center}
\setlength{\unitlength}{0.012500in}%
\includegraphics[width=0.7\linewidth]{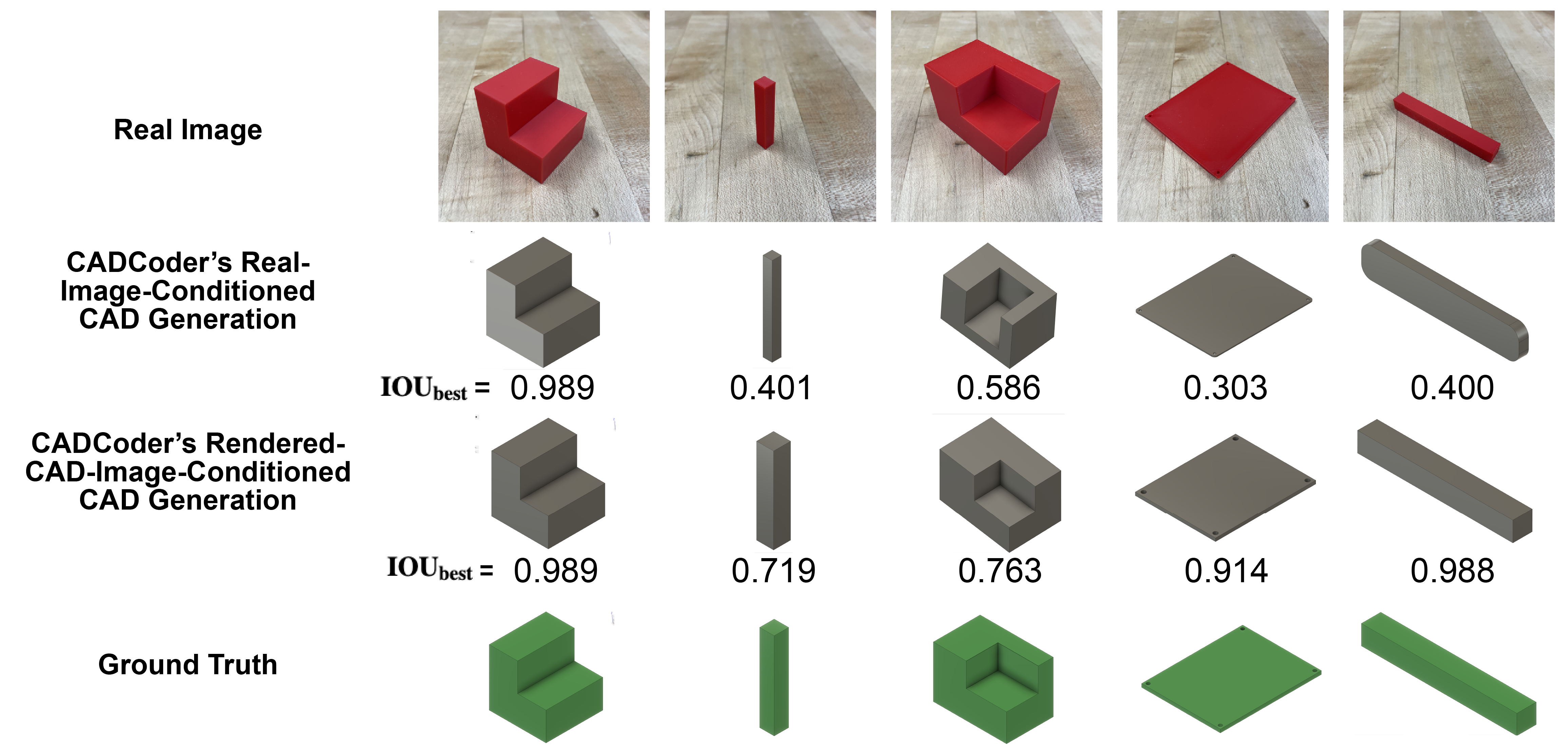}
\end{center}
\caption{We test \ours{}'s generalizability to real-image-conditioned CAD generation, a task not included in the fine-tuning dataset. 1st row: we 3D print several objects from \datasetnoe{}'s test set and photograph them at approximately isometric views. 2nd row: \ours{}'s real-image-conditioned CAD generation. 3rd row: \ours{}'s rendered-CAD image-conditioned CAD generation. 4th row: ground truth solids.}
\label{fig:real_object} 
\end{figure*}

\subsection{Generalization Experiments}
We hypothesized that a key benefit of training an image-conditioned, CAD generating model using VLMs fine-tuned on CAD code would be in the generalizability of the model. In this section, we investigate the generalizability of \ours{} to images of real-world objects and evaluate \ours{}'s extendability to CAD operations not explicitly included in the \dataset{}.

\subsubsection{Performance on Images of Real-World Objects}
While \ours{} was not explicitly fine-tuned to generate CAD code from images of real objects, we hypothesized that \ours{}’s use of a pre-trained image encoder (CLIP-ViT-L-336px) would help it generalize to this unseen task. To investigate its generalizability to real images, we 3D printed five objects from the test set of \dataset{}. We photographed the objects on a wooden table at approximately isometric angles (Figure \ref{fig:real_object}), as all of the \dataset{} rendered CAD images capture isometric views of the CAD solids. The 3D CAD models that \ours{} generates given these real-image inputs can be seen in the second row of Figure \ref{fig:real_object}. While not perfect matches, these generated CAD models conditioned on the real images broadly agree with the ground-truth 3D solids. These results indicate that \ours{} has potential – where trained-from-scratch models struggle – to generalize to CAD generation from real images, even when not explicitly fine-tuned on this task.

The importance of this rendered-CAD image to real-world image translation cannot be overstated. Developing a sufficiently large dataset of real images coupled with CAD code would be extremely expensive, necessitating the time-consuming and costly fabrication of hundreds of thousands of physical objects. As such, image-CAD code datasets will likely remain limited to images of rendered CAD models rather than real photographs for the foreseeable future. However, the typical user of an image-conditioned, CAD generating model will rarely input an image of a CAD model, as this implies that they already have the CAD. It is much more likely that a user would want to generate a CAD model given a real photograph. Quantifying the generalizability of CAD generating models to real images is therefore critical.

While \ours{} exhibits generalizability to real images, the model still performs noticeably better at generating CAD code given images of rendered CAD models, as seen by the relative score differences between the second and third rows of Figure \ref{fig:real_object}. In some of the real-image-conditioned generation examples (e.g. objects 2 and 4), \ours{} predicts the correct CAD operations, but a poor aspect ratio, resulting in a lower IOU\textsubscript{best} than the rendered-CAD image-conditioned equivalent. We can see this for the second object, where \ours{} underestimates the object’s side length. \ours{}’s decreased dimensional accuracy given images of real objects – \textit{colorful} images from slightly \textit{varying perspectives} –  is perhaps not surprising given that we train it on \textit{perfectly isometric} and \textit{gray} views. To mitigate this perspective problem, future work will train \ours{} on multiple material-rendered-CAD images (from different camera perspectives) per CAD code sample. For objects requiring multiple extrudes (e.g., object 3), \ours{}, given the real images, struggles to capture correct CAD operations for these more “complex” solids. Improved generation for these more complicated CAD models might be obtained by incorporating real images of geometric solids into Stage 1 training, so the MLP layers learn a better “description” of these types of images for the LLM.

\subsubsection{Performance on Unseen CAD Operations}
We suspected that our approach of fine-tuning a foundation VLM on CAD code would enable the model to extend to CAD operations it did not explicitly see during fine-tuning. To test this theory, we planned an experiment to ask \ours{} to add fillets to generated solid models. Filleting is a good test of \ours{}'s extensibility, as none of the examples in our fine-tuning \dataset{} included CadQuery code with fillet commands, but CadQuery supports filleting operations (using the \texttt{solid.edges().fillet(fillet\_radius)} command). When prompted (in a second query) to add fillets to all edges of a simple rectangular prism CAD model it had previously generated given an input image, \ours{} could not do it. Even when the filleting prompt was explicitly provided -- reminding the model of the correct syntax of the CadQuery filleting command (see Figure \ref{fig:fillet}) -- \ours{} would output an extended version of its original code that made no use of the specified fillet command and was syntactically incorrect.

\begin{figure*}[t]
\begin{center}
\setlength{\unitlength}{0.012500in}%
\includegraphics[width=0.8\linewidth]{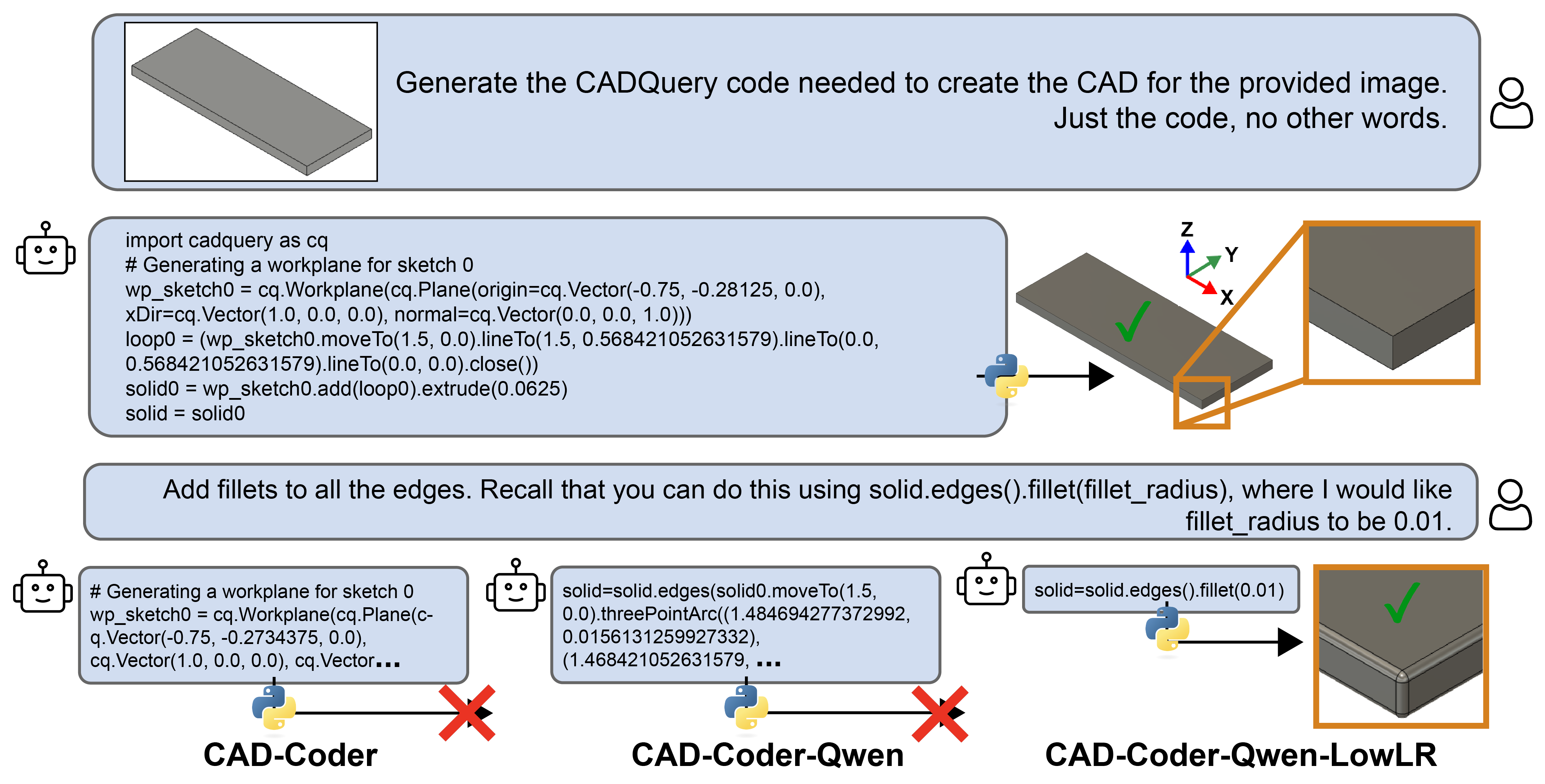}
\end{center}
\caption{Examples of \ours{} variants attempting to add fillets to CAD solids. The figure compares the performance of \ours{}, \ours{}-Qwen2.5-14B, and \ours{}-Qwen2.5-14B-LowLR given identical filleting prompts. Only the LowLR variant correctly applies the fillet operations.}

\label{fig:fillet} 
\end{figure*}

We suspected that \ours{}'s inability to fillet may stem from the pre-trained LLM's (Vicuna-13B-v1.5) lack of knowledge about CadQuery. To better characterize Vicuna-13B-v1.5's knowledge of CadQuery, we prompted the text-only model to generate CadQuery code of a 4x5x6 unit box with fillets on all edges. The model was not able to generate syntactically correct code. To remedy this issue of \ours{}'s limited knowledge of CadQuery, we experimented with training \ours{} using a different pre-trained LLM. Qwen2.5-Coder-32B-Instruct is the best performing model on an LLM coding leaderboard (Hugginface's bigcode-models-leaderboard~\footnote{https://huggingface.co/spaces/bigcode/bigcode-models-leaderboard, as of March 17, 2025}), and we tested a smaller variant (14B instead of 32B, due to compute constraints) for evaluation on its ability to generate CadQuery code. When given the same filleted-box text-only prompt that Vicuna-13B-v1.5 struggled with, Qwen2.5-Coder-14B-Instruct output a CadQuery script that produced the desired solid model.

We trained another model, \ours{}-Qwen2.5-14B, following the same architecture and training procedure as \ours{} except Qwen2.5-Coder-14B-Instruct was used as the pre-trained LLM in favor of Vicuna-13B-v1.5. Stage 2 training for \ours{}-Qwen2.5-14B took 3.31 hours on 8 H100 GPUs. We suspected that this model -- given its pre-existing knowledge of CadQuery -- would perform significantly better than \ours{} at adding fillets to a CAD model's edges. However, when given the same two-query filleting question as \ours{}, \ours{}-Qwen2.5-14B produced syntactically invalid code, incorrectly using the \texttt{.edges()} CadQuery method (see Figure \ref{fig:fillet}). This result was surprising, since we knew that Qwen2.5-Coder-14B-Instruct had accurate knowledge of these methods before its fine-tuning in the \ours{} pipeline. This result indicates that \ours{}-Qwen2.5-14B may have lost some of its pre-trained CadQuery knowledge during fine-tuning.

In an attempt to help preserve Qwen2.5-Coder-14B-Instruct's pre-trained knowledge, we trained one more variant of \ours{}, which we call \ours{}-Qwen2.5-14B-LowLR, identical to \ours{}-Qwen2.5-14B except with a halved learning rate during Stage 2 training (1e-5 instead of 2e-5). This model also took 3.31 hours to train on 8 H100 GPUs. When given the same two-query filleting questions as the other two \ours{} variants, \ours{}-Qwen2.5-14B-LowLR successfully adds fillets to the generated solid (Figure \ref{fig:fillet}). This result suggests that with tuning of the Stage 2 training hyperparameters, \ours{}-type models have the potential to generalize to CAD operations not explicitly included in the fine-tuning dataset. It is important to note that \ours{}-Qwen2.5-14B-LowLR's extendability to unseen CAD operations is not robust. The filleting prompt used (Figure \ref{fig:fillet}) is quite prescriptive; more abstract prompts (e.g. ``add fillets to all edges'') that do not remind the model about the \texttt{.fillet()} method usage do not elicit the same correct behavior from \ours{}-Qwen2.5-14B-LowLR. 
Future work will investigate training strategies that more effectively preserve pre-trained LLM knowledge in VLM fine-tuning.

\begin{table}[th]
\centering
\caption{Comparison of variants of \ours{} with different pre-trained LLMs and learning rates, evaluated by Valid Syntax Rate (VSR) and Intersection-Over-Union (IOU\textsubscript{best}) scores on the 100-sample test subset.}
\label{tab:qwen-variants}
\resizebox{\columnwidth}{!}{%
\begin{tabular}{@{}lcccc@{}}
\toprule
\textbf{Model} & \textbf{Pre-trained LLM} & \makecell{\textbf{Learning} \\ \textbf{Rate}} & \textbf{VSR $\uparrow$} & \textbf{IOU\textsubscript{best} $\uparrow$} \\ 
\midrule
\ours{} & Vicuna-13B-v1.5 & 2e-5 & \textbf{100\%} & \textbf{0.675} \\
\midrule
\makecell[l]{\ours{}- \\ Qwen2.5-14B} & \makecell{Qwen2.5-Coder-\\14B-Instruct} & 2e-5 & 95\% & 0.641 \\
\midrule
\makecell[l]{\ours{}- \\ Qwen2.5-14B-LowLR} & \makecell{Qwen2.5-Coder-\\14B-Instruct} & 1e-5 & 94\% & 0.592 \\
\addlinespace
\bottomrule
\end{tabular}%
}%
\end{table}

In Table \ref{tab:qwen-variants}, we also quantify \ours{}-Qwen2.5-14B and \ours{}-Qwen2.5-14B-LowLR according to the VSR and IOU\textsubscript{best} metrics used in Section \ref{sec:compare_with_baselines}. Even though it leverages a code-specific pre-trained LLM, \ours{}-Qwen2.5-14B unexpectedly performs marginally worse than \ours{} on both the VSR and IOU\textsubscript{best} metrics. The 5/100 syntactically incorrect CadQuery scripts that \ours{}-Qwen2.5-14B generates all correspond with relatively complex CAD models, but whose ground-truth scripts fall within the 4096 token limit. In generating these scripts, \ours{}-Qwen2.5-14B exceeds the maximum token limit during generation, so the scripts are truncated mid-command. Despite Stage 2 training on the \dataset{} (limited to 4096 token examples), \ours{}-Qwen2.5-14B's long generation behavior may be a result of the fact that its LLM, Qwen2.5-Coder-14B-Instruct, was pre-trained to handle much longer context lengths (up to 131k tokens). In contrast, \ours{}'s pre-trained LLM, Vicuna-13B-v1.5, was pre-trained to handle context lengths up to 4096 tokens. While further work is needed to better adapt longer context pre-trained LLMs (>4096 tokens) to our shorter context length dataset, Qwen2.5-Coder-14B-Instruct's longer context limit will be useful in adapting this work to more complex, longer token-length CAD scripts.

In comparison to \ours{}-Qwen2.5-14B, \ours{}-Qwen2.5-14B-LowLR performs worse in terms of IOU\textsubscript{best}. This result illustrates that lowering the learning rate -- to preserve pre-training knowledge of CadQuery and improve generalization on unseen CAD operations -- comes at the cost of reduced performance on the fine-tuning task. The trade-off between improvement on the fine-tuning task and the retention of pre-trained knowledge is a known problem for foundation models~\cite{luo2023empirical}. Several fine-tuning strategies~\cite{mukhoti2023fine, heyrani2025activation} have explored alternatives to end-to-end fine-tuning to better balance the two. We intend to explore these in future work.

\section{Limitations, Future Work and Conclusion}
While \ours{} demonstrates significant advancements in CAD code generation, it also exhibits some limitations. Firstly, despite its improved generalization to real-world images, the model occasionally struggles to accurately infer precise dimensions and proportions from images taken from varying perspectives. This indicates that further work on perspective and real-world image robustness, possibly through multi-view training or image augmentation strategies, is required. Additionally, although our experiments showed some capability to generalize to unseen CAD operations, such generalization remains heavily prompt-dependent, highlighting the need for enhanced strategies to preserve and leverage pre-existing LLM knowledge during fine-tuning.

Future research will also focus on improving the robustness and generalizability of \ours{} by expanding the training dataset and exploring more model architectures. Efforts will be directed towards incorporating multi-view and perspective-invariant image inputs into training to improve model accuracy on real-world images. Developing an extensive test-set of real-world images paired with CAD code would also help to rigorously quantify the performance of \ours{} and other models on the real-photo-to-CAD-code task. Other promising directions include exploring continual learning to better retain and extend pre-trained VLM knowledge of CAD operations. Utilizing reasoning-based LLM models or incorporating chain-of-thought logic into the fine-tuning dataset could also improve the accuracy of generated 3D solids.

This paper introduced \ours{}, a Vision-Language Model explicitly fine-tuned for generating accurate and editable CadQuery code directly from visual inputs. Leveraging the new \dataset{}, we demonstrated substantial improvements over existing image-conditioned, code-generating baselines, achieving state-of-the-art syntactic validity and geometric accuracy. Our findings illustrate the significant potential of fine-tuned VLMs to streamline and enhance engineering design workflows with editable CAD code generation, setting a strong foundation for future research into automated CAD modeling.

\bibliographystyle{asmems4}

\section*{Acknowledgments}
The authors gratefully acknowledge MIT-IBM for their partial support of this work. This material is based upon work supported by the National Science Foundation Graduate Research Fellowship. Any opinion, findings, and conclusions or recommendations expressed in this material are those of the authors(s) and do not necessarily reflect the views of the National Science Foundation.

\bibliography{asme2e}

\appendix
\section{Optimal Solid Model Alignment}
\label{app:iou}
In this section, we detail the continuous solid shape (Rigid Body) alignment problem formally and demonstrate how two shapes are aligned to measure the intersection over union (IOU) for two given shapes. Firstly, we treat solid objects as rigid bodies in 3-dimensional space, that is to say, an arbitrary solid in space is represented as an in-definite set $\Omega \subset \mathbb{R}^3$ with boundary $\Gamma:\partial{\Omega}$. Given such a solid body we seek to identify the optimal transformation to align any two arbitrary solid geometries.

First, we will analyze the case of two identical~(in shape) rigid bodies that have been transformed in space arbitrarily, that is to say, that they have been rotated, scaled, and translated arbitrarily. Under this assumption Lemma \ref{lm:bijection} implies there exists a relative volume preserving bijection, $f: \Omega_1 \rightarrow \Omega_2$ between the two sets. Note that given the indefinite~(continuous) nature of the bodies, hence making the cardinality of sets the same, there exist infinitely many bijections between the two sets. Here we assume that we know the ideal bijection between the identical shapes. Ultimately, the assumption made is that we know the solution to our problem exists, and we formulate the normalization that will allow us to actually find this solution. More intuitively, $f$ maps the points in $\Omega_1$ to the exact same part of the shape in $\Omega_2$ with respect to the shape. This assumption comes with no loss of generality in our shape analysis, and we will not assume anything about $f$ in general other than it's existence and relative volume preservation. The existence of $f$, is implied by the fact that the shapes are assumed to be identical, meaning they are identical manifolds transformed with an \textbf{invertible affine} transformation~(See Lemma \ref{lm:bijection}). Ultimately $f$ represents the correspondence between the points in the two sets in 3D Euclidean space, similar to the correspondence often seen in point-based shape matching, made continuous in an abstract sense~(no assumptions other than existence are needed for our purposes). Given this bijection, we define the \textit{distance}/\textit{difference} between two solid bodies as:

\begin{equation}
    d(\Omega_1,\Omega_2):=\int_{\Omega_1}\|\mathbf{x}-f(\mathbf{x})\|_2^2 d\mathbf{x}
\end{equation}

Where $d\mathbf{x}$ is the volume differential in Euclidean space. This is essentially the volume integral of the distance between the points in $\Omega_1$ and $\Omega_2$ given bijection $f$. Now we formulate our problem of aligning said shapes, namely finding the transformation~(inverse of $f$) such that if applied to $\Omega_1$ maps it onto $\Omega_2$, as:

\begin{equation}
\begin{split}
    \min_{\mathbf{R},s,\mathbf{t}} \quad & d = \int_{\Omega_1}\|s\mathbf{R}\mathbf{x}+\mathbf{t}-f(\mathbf{x})\|_2^2 d\mathbf{x}\\
    \text{s.t.} \quad& \mathbf{R} \in \mathbf{SO}(3) \\
    & \mathbf{R}\mathbf{R^T}=\mathbf{I} \\
    & \mathbf{t}\in\mathbb{R}^3\\
    & s>0
\end{split}
\end{equation}

Where $\mathbf{R}$ is a rotation matrix, $s$ is a real number representing the scale, and $\mathbf{t}$ is a translation vector in this setting. In Lemma \ref{lm:aligment} we attempt to solve this alignment problem for identical shapes. We show that under this assumption of $f$'s existence, we can solve the problem given two solids. However, we do not necessarily have identical shapes when comparing a target CAD model with a candidate reconstruction. However, we can establish a shape normalization scheme that guarantees identical shapes will be in perfect coincidence. Noting Lemma \ref{lm:aligment}, we propose the operator:

\begin{equation}
    \label{eqn:normalize}
    n(\Omega)=\{\frac{\mathbf{x}-\bar{\mathbf{x}}}{\sqrt{\frac{\operatorname{tr}(I)} {2\times \mathrm{V o l} ( \Omega_{2} )}}}\mid\mathbf{x}\in \Omega\},
\end{equation}

where $\bf I$ is the matrix of inertia for solid $\Omega$. Finally, we observed in Lemma \ref{lm:aligment}, that the ideal alignment will align the principle axes of two solids, as such given two solids we first normalize scale and translation using (\ref{eqn:normalize}) and then align the principle axes of inertial from $\bf I$ for each solid. Note that the alignment of the principal axes is ambiguous as the direction of the principal axes is not known. As such there are 8 possible ways to align said angles of which 4 are valid rotations in $SO(3)$, meaning that in practice we perform all 4 alignments and pick the one with the best IOU.

\begin{lemma}
\label{lm:bijection}
Let \(\Omega_1 \subset \mathbb{R}^3\) be a bounded solid with nonzero volume and let
\[
\Omega_2 = \{\, s\,\mathbf{R}\mathbf{x} + \mathbf{t} \mid \mathbf{x} \in \Omega_1 \,\}
\]
be its image under an affine transformation where \(\mathbf{R} \in SO(3)\) (so that \(\det(\mathbf{R}) = 1\)), \(s > 0\), and \(\mathbf{t} \in \mathbb{R}^3\). Then the mapping
\[
f : \Omega_1 \to \Omega_2,\quad f(\mathbf{x}) = s\,\mathbf{R}\mathbf{x} + \mathbf{t},
\]
is a bijection satisfying the following relative volume preservation property: For any integrable function \(g:\Omega_2\to\mathbb{R}\),
\[
\int_{\Omega_2} g(\mathbf{y})\, d\mathbf{y} = \frac{\operatorname{Vol}(\Omega_2)}{\operatorname{Vol}(\Omega_1)} \int_{\Omega_1} g\bigl(f(\mathbf{x})\bigr)\, d\mathbf{x}.
\]
\end{lemma}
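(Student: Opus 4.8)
The plan is to separate the claim into three pieces: bijectivity of $f$, the change-of-variables identity with an explicit Jacobian factor, and the identification of that factor with the volume ratio. None of these is deep; the work is essentially bookkeeping, and the value of the lemma lies in pinning down the exact constant.

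First I would verify bijectivity. The map $f(\mathbf{x}) = s\mathbf{R}\mathbf{x}+\mathbf{t}$ is the restriction to $\Omega_1$ of a global affine self-map of $\mathbb{R}^3$ whose linear part $s\mathbf{R}$ has determinant $s^3\det(\mathbf{R}) = s^3 > 0$ (since $\mathbf{R}\in SO(3)$ and $s>0$). An invertible linear part makes the affine map a global bijection of $\mathbb{R}^3$ with inverse $\mathbf{y}\mapsto \tfrac{1}{s}\mathbf{R}^{T}(\mathbf{y}-\mathbf{t})$, and because $\Omega_2$ is defined as $f(\Omega_1)$, the restricted map $f:\Omega_1\to\Omega_2$ is automatically a bijection.

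Next I would invoke the change-of-variables theorem for the $C^1$ diffeomorphism $f$. Its Jacobian $Df(\mathbf{x}) = s\mathbf{R}$ is constant, with $|\det Df| = s^3$, so for any integrable $g$ on $\Omega_2$ one obtains
\begin{equation}
\int_{\Omega_2} g(\mathbf{y})\,d\mathbf{y} = \int_{\Omega_1} g\bigl(f(\mathbf{x})\bigr)\,|\det Df(\mathbf{x})|\,d\mathbf{x} = s^3\int_{\Omega_1} g\bigl(f(\mathbf{x})\bigr)\,d\mathbf{x}.
\end{equation}
To recast the factor $s^3$ in the stated form, I would specialize to $g\equiv 1$, which gives $\operatorname{Vol}(\Omega_2) = s^3\operatorname{Vol}(\Omega_1)$; since $\operatorname{Vol}(\Omega_1)\neq 0$, this yields $s^3 = \operatorname{Vol}(\Omega_2)/\operatorname{Vol}(\Omega_1)$, and substituting back into the displayed identity completes the argument.

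The only genuine care point — rather than a true obstacle — is measurability and integrability: I would note that $\Omega_1$ is bounded and that affine maps carry Lebesgue-measurable sets to measurable sets, so $\Omega_2$ is measurable and the hypotheses of the change-of-variables theorem are satisfied. The conceptual content worth emphasizing is that the single scalar $s^3$ plays a double role: it is simultaneously the constant Jacobian of $f$ and the ratio of the two solids' volumes, which is precisely what makes the relative-volume-preservation statement clean and independent of the particular affine frame.
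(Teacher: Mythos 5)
Your proposal is correct and follows essentially the same route as the paper's own proof: bijectivity via the invertible linear part $s\mathbf{R}$ with explicit inverse, the change-of-variables formula with constant Jacobian determinant $s^3$, and specialization to $g\equiv 1$ to identify $s^3$ with the volume ratio. The only difference is your added remark on measurability, which is a minor refinement rather than a different argument.
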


\begin{proof}
\textbf{1. Bijectivity:}  
The transformation
\[
f(\mathbf{x}) = s\,\mathbf{R}\mathbf{x} + \mathbf{t}
\]
is an affine map. Since \(\mathbf{R}\) is a rotation (and thus invertible) and \(s>0\), the matrix \(s\,\mathbf{R}\) is invertible. Consequently, \(f\) is one-to-one and onto, with the inverse given by
\[
f^{-1}(\mathbf{y}) = \mathbf{R}^T \left(\frac{\mathbf{y}-\mathbf{t}}{s}\right).
\]
\textbf{2. Jacobian Determinant and Volume Scaling:}  
Since \(f\) is affine, its differential \(Df(\mathbf{x})\) is constant and equal to \(s\,\mathbf{R}\) for all \(\mathbf{x}\in\Omega_1\). Its Jacobian determinant is
\[
\det\bigl(s\,\mathbf{R}\bigr) = s^3\,\det(\mathbf{R}) = s^3,
\]
because \(\det(\mathbf{R})=1\) given $R\in SO(3)$.

\textbf{3. Change of Variables and Relative Volume Preservation:}  
By the change-of-variable, for any integrable function \(g\) defined on \(\Omega_2\),
\[
\int_{\Omega_2} g(\mathbf{y})\, d\mathbf{y} = \int_{\Omega_1} g\bigl(f(\mathbf{x})\bigr) \left|\det\bigl(Df(\mathbf{x})\bigr)\right|\, d\mathbf{x} = s^3 \int_{\Omega_1} g\bigl(f(\mathbf{x})\bigr)\, d\mathbf{x}.
\]
In particular, choosing \(g(\mathbf{y})\equiv1\) yields
\[
\operatorname{Vol}(\Omega_2) = s^3\,\operatorname{Vol}(\Omega_1) \Rightarrow s^3= \frac{\operatorname{Vol}(\Omega_2)}{\operatorname{Vol}(\Omega_1)},
\]

where $\operatorname{Vol}(\Omega_1)=\int_{\Omega_1} d\mathbf{x}$ and $\operatorname{Vol}(\Omega_2)=\int_{\Omega_2} d\mathbf{x}$ This demonstrates that the mapping \(f\) not only provides a bijective correspondence between points in \(\Omega_1\) and \(\Omega_2\), but it also preserves the relative volume distribution.
\end{proof}

\begin{lemma}[Optimal Rigid‐Body Alignment]
\label{lm:aligment}
Let $\Omega_1$ and $\Omega_2$ be two identical in shape rigid bodies in $\mathbb{R}^3$, assumed to be related by a transformation:

\[
f : \Omega_1 \to \Omega_2,\quad f(\mathbf{x}) = s^*\,\mathbf{R^*}\mathbf{x} + \mathbf{t^*}
\]

Define
\[
\min_{\mathbf{R}\in SO(3),\, s>0,\, \mathbf{t}\in \mathbb{R}^3}
\int_{\Omega_1}\bigl\|\,s\,\mathbf{R}\,\mathbf{x} + \mathbf{t} - f(\mathbf{x})\bigr\|_2^2\,d\mathbf{x}.
\]
Then there exists a unique solution:

\[
\mathbf{R}^* = \bf VU^T,
\]
\[
s^{*}=\frac{\sqrt{\frac{\operatorname{tr}(\mathbf{I_2})} {\mathrm{V o l} ( \Omega_{2} )}}} {\sqrt{\frac{\operatorname{tr}(\mathbf{I_1})} {\mathrm{V o l} ( \Omega_{1} )} }},
\]
\[
t^* = \bar{x}_2 - sR\bar{x}_1,
\]

where $\bf I_1$ and $\bf I_2$ are the matrices of inertia for solid $\Omega_1$ and $\Omega_2$ respectively and:
\[
\bar{x}_1 = \frac{\int_{\Omega_1} \bf x\,d\mathbf{\mathbf{x}}}{\int_{\Omega_1} \,\mathbf{dx}},\quad \bar{x}_2 = \frac{\int_{\Omega_2} \bf x\,\mathbf{dx}}{\int_{\Omega_2} \,\mathbf{dx}},
\]
\[
{\bf S}=\int_{\Omega_{1}} (f ( {\bf x} )-\bar{\mathbf{x}}_2) \, (\bf x- \bar{\mathbf{x}}_1)^{T} \, d {\bf x} = \bf U\Sigma V^T,
\]
that achieves this minimum.
\end{lemma}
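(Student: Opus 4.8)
The plan is to treat this as a continuous (volume-integral) analogue of the orthogonal Procrustes / Kabsch--Umeyama alignment problem, decoupling the minimization over $\mathbf{t}$, then $\mathbf{R}$, then $s$, and using Lemma~\ref{lm:bijection} to convert any integral over $\Omega_1$ that involves $f$ into a geometric moment of $\Omega_2$. First I would minimize over the translation $\mathbf{t}$ with $\mathbf{R}$ and $s$ held fixed. The integrand is convex quadratic in $\mathbf{t}$, so the gradient condition $\int_{\Omega_1}\bigl(s\mathbf{R}\mathbf{x}+\mathbf{t}-f(\mathbf{x})\bigr)\,d\mathbf{x}=\mathbf{0}$ is necessary and sufficient. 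Using $\int_{\Omega_1}\mathbf{x}\,d\mathbf{x}=\bar{\mathbf{x}}_1\operatorname{Vol}(\Omega_1)$ and applying Lemma~\ref{lm:bijection} with $g(\mathbf{y})=\mathbf{y}$ to obtain $\int_{\Omega_1}f(\mathbf{x})\,d\mathbf{x}=\bar{\mathbf{x}}_2\operatorname{Vol}(\Omega_1)$, this collapses to the centroid-matching rule $\mathbf{t}^{*}=\bar{\mathbf{x}}_2-s\mathbf{R}\bar{\mathbf{x}}_1$. Substituting $\mathbf{t}^{*}$ and introducing the centered variables $\mathbf{u}=\mathbf{x}-\bar{\mathbf{x}}_1$ and $\mathbf{w}=f(\mathbf{x})-\bar{\mathbf{x}}_2$, the objective reduces to $\int_{\Omega_1}\|s\mathbf{R}\mathbf{u}-\mathbf{w}\|_2^2\,d\mathbf{x}$, with the translation eliminated.

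Next I would expand this centered objective as $s^2A-2s\operatorname{tr}(\mathbf{R}^{T}\mathbf{S})+D$, where $A=\int_{\Omega_1}\|\mathbf{u}\|^2\,d\mathbf{x}$ and $D=\int_{\Omega_1}\|\mathbf{w}\|^2\,d\mathbf{x}$, the cross term being rewritten via the cyclic property of the trace so that the cross-covariance $\mathbf{S}=\int_{\Omega_1}\mathbf{w}\mathbf{u}^{T}\,d\mathbf{x}$ from the statement appears (using $\|\mathbf{R}\mathbf{u}\|=\|\mathbf{u}\|$). Since $s>0$, minimizing over $\mathbf{R}\in SO(3)$ is equivalent to maximizing $\operatorname{tr}(\mathbf{R}^{T}\mathbf{S})$. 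Substituting the SVD $\mathbf{S}=\mathbf{U}\Sigma\mathbf{V}^{T}$ and applying the von Neumann trace inequality (the entries of $\Sigma$ are nonnegative and every $\mathbf{R}\in SO(3)$ has unit singular values) bounds this trace by $\operatorname{tr}(\Sigma)$, with equality attained precisely at the rotation assembled from the orthogonal SVD factors of $\mathbf{S}$, i.e.\ the form $\mathbf{R}^{*}=\mathbf{V}\mathbf{U}^{T}$ recorded in the statement.

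With $\mathbf{R}$ fixed at its optimum, the residual $s^{2}A-2sB+D$ (here $B=\operatorname{tr}(\mathbf{R}^{*T}\mathbf{S})$) is strictly convex in $s$ and minimized at $s^{*}=B/A$. For identical shapes $\mathbf{w}=s^{*}\mathbf{R}^{*}\mathbf{u}$, so this equals $\sqrt{D/A}$, the square root of the ratio of the two bodies' centered second moments. I would then convert these moments into the stated inertia/volume form using the identity $\operatorname{tr}(\mathbf{I})=2\int_{\Omega}\|\mathbf{x}-\bar{\mathbf{x}}\|^2\,d\mathbf{x}$ relating the trace of the inertia tensor to the second moment (giving $A=\tfrac12\operatorname{tr}(\mathbf{I}_1)$), together with Lemma~\ref{lm:bijection} applied to $g(\mathbf{y})=\|\mathbf{y}-\bar{\mathbf{x}}_2\|^2$, whose volume factor supplies exactly the $\operatorname{Vol}(\Omega_1)/\operatorname{Vol}(\Omega_2)$ ratio. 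Combining these yields $s^{*}=\sqrt{\bigl(\operatorname{tr}(\mathbf{I}_2)/\operatorname{Vol}(\Omega_2)\bigr)/\bigl(\operatorname{tr}(\mathbf{I}_1)/\operatorname{Vol}(\Omega_1)\bigr)}$, and confirms that the minimum is zero, attained at the true transformation.

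The step I expect to be the main obstacle is the rotation optimization, specifically enforcing the determinant-one constraint of $SO(3)$ rather than that of the full orthogonal group: the trace bound is saturated by the plain SVD rotation only when its determinant is $+1$, and in the reflection case one must apply the standard Kabsch--Umeyama correction, flipping the sign associated with the smallest singular value; careful bookkeeping of the transpose and orientation conventions in $\mathbf{S}$ is needed here to land on the stated $\mathbf{V}\mathbf{U}^{T}$. Tied to this is the uniqueness claim: because $\Omega_1$ is a genuine three-dimensional body, its centered second-moment matrix is positive definite, so $\mathbf{S}$ has full rank and strictly positive singular values; I would argue the minimizer is then unique provided these singular values are distinct, and flag rotational symmetry of the body—where the principal axes, and hence their alignment, become ambiguous—as precisely the degenerate case in which uniqueness of $\mathbf{R}^{*}$ can fail, consistent with the ambiguous principal-axis remark preceding the lemma. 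Uniqueness of $s^{*}$ and $\mathbf{t}^{*}$ follows immediately from strict convexity in those variables.
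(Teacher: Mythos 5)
Your proposal is correct and follows essentially the same route as the paper's proof: eliminate the translation via the gradient/centroid-matching condition (using Lemma~\ref{lm:bijection} to evaluate $\int_{\Omega_1} f(\mathbf{x})\,d\mathbf{x}$), reduce the rotation step to an orthogonal Procrustes problem solved by the SVD of the cross-covariance $\mathbf{S}$ giving $\mathbf{R}^*=\mathbf{V}\mathbf{U}^T$, and recover the scale from the centered second moments $\tfrac12\operatorname{tr}(\mathbf{I}_i)$ using the relative volume preservation of Lemma~\ref{lm:bijection}. The only differences are refinements on your side rather than a different route: you obtain $s^*$ by explicit convex minimization in $s$ before invoking the identical-shape assumption (the paper reads $s^*$ off from $f$ directly), and your treatment of the $SO(3)$-versus-$O(3)$ reflection correction and of the failure of uniqueness of $\mathbf{R}^*$ for rotationally symmetric bodies is more careful than the paper's proof, which passes over both points.
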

\begin{proof}
First, we note that by Lemma \ref{lm:bijection}, $f$ is a volume preserving bijection. We will use this fact in our derivation of the solution. To find the ideal translation vector we differentiate the objective with respect to the translation and find the root of the gradient:

\begin{equation}
    \frac{\partial d}{\partial t} = \frac{\partial}{\partial t} \int_{\Omega_1}\|s\mathbf{R}\mathbf{x}+\mathbf{t}-f(\mathbf{x})\|_2^2 d\mathbf{x} = 2 \int_{\Omega_1} \left( s\,R\,x + t - f(x) \right) \, d\mathbf{x}.
\end{equation}

Note that we can move the differentiation operation inside the integral given the Leibniz rule. Given this, we can find the root of the gradient to obtain the optimal translation:

\begin{equation}
\begin{split}
    \int_{\Omega_1} \left( s\,\mathbf{R}\,x + t^* - f(x) \right) \, d\mathbf{x} = \\s\,\mathbf{R}\int_{\Omega_1} x\,d\mathbf{x} + \mathbf{t^*}\int_{\Omega_1} \,d\mathbf{x} - \int_{\Omega_1} f(x)\,d\mathbf{x} = \\ s\,\mathbf{R}\,\frac{\int_{\Omega_1} x\,d\mathbf{x}}{\int_{\Omega_1} \,d\mathbf{x}} + \mathbf{t^*} - \frac{\int_{\Omega_1} f(x)\,d\mathbf{x}}{\int_{\Omega_1} \,d\mathbf{x}} = 0 \\ \Rightarrow \mathbf{t^*}= \frac{\int_{\Omega_1} f(x)\,d\mathbf{x}}{\int_{\Omega_1} \,d\mathbf{x}} - s\,\mathbf{R}\,\frac{\int_{\Omega_1} x\,d\mathbf{x}}{\int_{\Omega_1} \,d\mathbf{x}}.
\end{split}
\end{equation}

Now let $\bar{x}_1$, and $\bar{x}_2$ be the centroids of $\Omega_1$ and $\Omega_2$ respectively:

\begin{equation}
    \bar{x}_1 = \frac{\int_{\Omega_1} x\,\mathbf{d\mathbf{x}}}{\int_{\Omega_1} \,d\mathbf{x}},\quad \bar{x}_2 = \frac{\int_{\Omega_2} x\,d\mathbf{x}}{\int_{\Omega_2} \,d\mathbf{x}}
\end{equation}

by lemma \ref{lm:bijection} we have:

\begin{equation}
    \mathbf{t^*} = \bar{\mathbf{x}}_2 - sR\bar{\mathbf{x}}_1
\end{equation}

we can see that the sets $\Omega_1$ and $\Omega_2$ can be normalized with respect to translation by removing the centroids, defining $\tilde{\Omega_1}$ and $\tilde{\Omega_2}$ as:

\begin{align}
    \tilde{\Omega}_1 = \{x - \bar{x}_1\mid x\in\Omega_1\} \\
    \tilde{\Omega}_2 = \{x - \bar{x}_2\mid x\in\Omega_2\}
\end{align}

With this normalization, we have the variable change:
\begin{equation}
    \tilde{\mathbf{x}}=\mathbf{x}-\bar{\mathbf{x}_1}\quad \text { and } \quad \tilde{f}(\mathbf{x})=f(\mathbf{x})-\bar{\mathbf{x}_2},
\end{equation}

which allows us to rewrite our objective as:
\begin{equation}
    d(s, \mathbf{R})=\int_{\Omega_1}\|s \mathbf{R} \tilde{\mathbf{x}}-\tilde{f}(\mathbf{x})\|_2^2 d \mathbf{x}.
\end{equation}

From Lemma \ref{lm:bijection}, we assumed that there exists $f(\mathbf{x})=s^*\bf R^*x + t^*$ which means $\tilde{f}(\mathbf{x})=s^*\bf R^*x$ given our solution for $\bf t^*$. Here we can show that $s^*$ can be deduced given the change of variable with translation. Lemma \ref{lm:bijection} implies:

\begin{equation}
    \int_{\Omega_2} \|\mathbf{x}-\bar{\mathbf{x}}_2\|^2 = \frac{\operatorname{Vol}(\Omega_2)}{\operatorname{Vol}(\Omega_2)}\int_{\Omega_1} \|s^*\mathbf{R^*(x-\bar{\mathbf{x}}_1)}\|^2 
\end{equation}

Note $\|\mathbf{R}\mathbf{x}\|=\|\mathbf{x}\|$ given $\mathbf{R}\in SO(3)$, therefore:

\begin{equation}
    \int_{\Omega_2} \|\mathbf{x}-\bar{\mathbf{x}}_2\|^2 = {s^*}^2 \frac{\operatorname{Vol}(\Omega_2)}{\operatorname{Vol}(\Omega_1)}\int_{\Omega_1} \|\mathbf{x}-\bar{\mathbf{x}}_1\|^2 
\end{equation}

This implies:

\begin{equation}
   s^{*}=\frac{\sqrt{\frac{\int_{\Omega_{2}} \| \mathbf{x}-\bar{\mathbf{x}}_{2} \|^{2} \, d \mathbf{x}} {\mathrm{V o l} ( \Omega_{2} )}}} {\sqrt{\frac{\int_{\Omega_{1}} \| \mathbf{x}-\bar{\mathbf{x}}_{1} \|^{2} \, d \mathbf{x}} {\mathrm{V o l} ( \Omega_{1} )} }}.
\end{equation}

Let $\bf I_1$ and $\bf I_2$ be the tensors of inertia for $\Omega_1$ and $\Omega_2$, by definition $\int_{\Omega_{1}} \| \mathbf{x}-\bar{\mathbf{x}}_{1} \|^{2} \, d \mathbf{x}= \frac{1}{2}\operatorname{tr}(I_1)$ and $\int_{\Omega_{2}} \| \mathbf{x}-\bar{\mathbf{x}}_{2} \|^{2} \, d \mathbf{x}= \frac{1}{2}\operatorname{tr}(I_2)$, therefore:

\begin{equation}
   s^{*}=\frac{\sqrt{\frac{\operatorname{tr}(I_2)} {\mathrm{V o l} ( \Omega_{2} )}}} {\sqrt{\frac{\operatorname{tr}(I_1)} {\mathrm{V o l} ( \Omega_{1} )} }}.
\end{equation}

Note that this is under the Euclidean norm objective, and the L1 norm would yield a different scaling (Rotation would not preserve distance under L1 norm in Euclidean space). Now we can introduce the following symbols to simplify expressions:

\begin{align}
    \mathbf{A}=&\int_{\Omega_1} \tilde{\mathbf{x}} \tilde{\mathbf{x}}^T d \mathbf{x}, \quad \text{Note:} \int_{\Omega_1}\|\tilde{\mathbf{x}}\|^2 d \mathbf{x}=\mathrm{t r}(\mathbf{A}_1) = \frac{1}{2}\operatorname{tr}(\mathbf{I}_1)\\
    {\bf S}=&\int_{\Omega_{1}} \tilde{f} ( {\bf x} ) \, \tilde{\bf x}^{T} \, d {\bf x}, \quad \text{Note:} \int_{\Omega_{1}} \langle\mathbf{R} \, \tilde{\mathbf{x}}, \, \tilde{f} ( \mathbf{x} ) \rangle\, d \mathbf{x}=\mathrm{t r} \Big( \mathbf{R} \, \mathbf{S} \Big) 
\end{align}

Noting that $\int_{\Omega_1}\|\tilde{f}(\mathbf{x})\|^2 d \mathbf{x}$ is a constant and we will refer to it as $C$ below:

\begin{equation}
\label{eqn:shiftedobj}
d ( s, \mathbf{R} )=s^{2} \mathrm{~ t r} ( \mathbf{A} )-2 s \mathrm{~ t r} ( \mathbf{R} \, \mathbf{S} )+ C
\end{equation}

We can see that rotation only affects the objective only in the term $-2 s \mathrm{~ t r} ( \mathbf{R} \, \mathbf{S} )$ and since $s>0$ the optimal rotation $\mathbf{R}^*$ is equivalent to the solution to $\max_{\mathbf{R}^*\in SO(3)}\mathrm{~ t r} ( \mathbf{R} \, \mathbf{S} )$, which is an orthogonal Procrustes problem which can be solved by singular value decomposition~\cite{procrustesbook}. Let $\mathbf{S}=\mathbf{U} \boldsymbol{\Sigma} \mathbf{V}^{T} $ be the singular value decomposition of $\bf S$, then $\mathrm{~ t r} ( \mathbf{R} \, \mathbf{S} )=\mathrm{~ t r} ( \bf V^TRU \Sigma)$ which given all of $\bf U,V,R$ are orthogonal will be maximized when $\bf V^TRU=I$, hence:

\begin{equation}
    \bf R^*=VU^T
\end{equation}

Let:

\begin{align}
    \mathbf{A}_1=&\int_{\Omega_1} \tilde{\mathbf{x}} \tilde{\mathbf{x}}^T d \mathbf{x}\\
    \mathbf{A}_2=&\int_{\Omega_1} \tilde{f}(\mathbf{x}) \tilde{f}(\mathbf{x})^T d \mathbf{x}, \quad
\end{align}

Note that by definition of the matrix of inertia~(namely $\mathbf{I}=\int_{\Omega} \left( \| \mathbf{x}-\bar{\mathbf{x}} \|^{2} \mathbf{I}_{3}-( \mathbf{x}-\bar{\mathbf{x}} ) ( \mathbf{x}-\bar{\mathbf{x}} )^{T} \right) d \mathbf{x}.$), we have:

\begin{align}
\label{eqn:Ai}
    \mathbf{A}_{i}=\frac{1}{2}\mathrm{t r} ( \mathbf{I}_{i} ) \, \mathbf{I}_{3}-\mathbf{I}_{i} \quad \forall i\in\{1,2\}
\end{align}

Moreover, given the assumption of $f$'s existence, plugging in the optimal transformation we have:

\begin{equation}
\label{eqn:A2}
\begin{split}
    \mathbf{A}_2=&\int_{\Omega_1} \tilde{f}(\mathbf{x}) \tilde{f}(\mathbf{x})^T d \mathbf{x}\\
    =&\int_{\Omega_1} s^*\mathbf{R^*}\tilde{\mathbf{x}}\tilde{\mathbf{x}}^T\mathbf{R^*}^T s^* d \mathbf{x}\\
    =&{s^*}^2\mathbf{R^*}A_1\mathbf{R^*}^T
\end{split}
\end{equation}

Noting the fact that the eigenvectors of $\bf A_i$ are the same as $\bf I_i$ from (\ref{eqn:Ai}) we can see that interestingly this rotation aligns the principal axes of inertia for the two solids without ambiguity. However, in certain settings, one may choose to perform all possible 4 valid alignments of principle axes~(8 possibilities with directional ambiguity of eigenvectors of the matrix of inertia, 4 of which will be valid rotations in $SO(3)$) instead of finding the continuous covariance matrix $S$ and performing a singular value decomposition on it. Hence, the unique minimizing transformation parameters $\bf R^*$, $s^*$, $\bf t^*$  are given by the formulas above, completing the proof.
\end{proof}

\end{document}